\numberwithin{equation}{section}
\numberwithin{figure}{section}
\theoremstyle{plain}
\newtheorem{theorem}{Theorem}[section]
\newtheorem{lemma}[theorem]{Lemma}
\newtheorem{proposition}[theorem]{Proposition}
\theoremstyle{definition}
\newtheorem{example}[theorem]{Example}
\newcommand{\proposaldist}{\varrho}
\newcommand{\bitem}{\begin{itemize}}
\newcommand{\eitem}{\end{itemize}}
\newcommand{\mc}[1]{\mathcal{#1}}
\newcommand{\N}{\mathbb{N}}
\newcommand{\R}{\mathbb{R}}
\newcommand{\EE}{\mathbb{E}}
\newcommand{\PP}{\mathbb{P}}
\newcommand{\bpm}{\begin{pmatrix}}
\newcommand{\epm}{\end{pmatrix}}
\newcommand{\bvm}{\begin{vmatrix}}
\newcommand{\evm}{\end{vmatrix}}
\newcommand{\bsm}{\left(\begin{smallmatrix}}
\newcommand{\esm}{\end{smallmatrix}\right)}
\newcommand{\T}{\top}
\newcommand{\ol}[1]{\overline{#1}}
\newcommand{\wh}[1]{\widehat{#1}}
\newcommand{\wt}[1]{\widetilde{#1}}
\newcommand{\la}{\langle}
\newcommand{\ra}{\rangle}
\newcommand{\mrm}[1]{\mathrm{#1}}
\newcommand{\eins}{\mathbb{1}}
\DeclareMathSymbol{\mydiv}{\mathbin}{symbols}{"04}
\DeclareMathOperator{\mtrace}{tr}
\DeclareMathOperator{\mimg}{img}
\DeclareMathOperator{\proj}{proj}
\DeclareMathOperator{\Diag}{Diag}
\DeclareMathOperator{\Exp}{Exp}
\DeclareMathOperator{\KL}{KL}
\def\widebreve{\mathpalette\wide@breve}
\def\wide@breve#1#2{\sbox\z@{$#1#2$}%
     \mathop{\vbox{\m@th\ialign{##\crcr
\kern0.08em\brevefill#1{0.8\wd\z@}\crcr\noalign{\nointerlineskip}%
                    $\hss#1#2\hss$\crcr}}}\limits}
\def\brevefill#1#2{$\m@th\sbox\tw@{$#1($}%
  \hss\resizebox{#2}{\wd\tw@}{\rotatebox[origin=c]{90}{\upshape(}}\hss$}
\title[Generative Assignment Flows]{Generative Assignment Flows for Representing \\ and Learning Joint Distributions of Discrete Data}
\author[B.~Boll, D.~Gonzalez-Alvarado, S.~Petra, C.~Schn\"{o}rr]{Bastian Boll, Daniel Gonzalez-Alvarado, Stefania Petra, Christoph Schn\"{o}rr}
\address[B.~Boll, D.~Gonzalez-Alvarado]{Institute for Mathematics, Image and Pattern Analysis Group, Heidelberg University, Germany \newline
Corresponding author: Daniel Gonzalez-Alvarado} 
\email{daniel.gonzalez@iwr.uni-heidelberg.de}
\urladdr{\url{https://ipa.math.uni-heidelberg.de}}
\address[S.~Petra]{Mathematical Imaging Group,  Department of Mathematics \& Centre for Advanced Analytics and Predictive Sciences
(CAAPS), University of Augsburg, Universitätsstr. 14, 86159 Augsburg, Germany} 
\email{Stefania.Petra@uni-a.de}
\urladdr{\url{https://www.uni-augsburg.de/de/fakultaet/mntf/math/prof/mig/}}
\address[ C.~Schn\"{o}rr]{Institute for Mathematics \& Research Station Geometry and Dynamics,  Image and Pattern Analysis Group, Heidelberg University, Germany} 
\email{schnoerr@math.uni-heidelberg.de}
\urladdr{\url{https://ipa.math.uni-heidelberg.de}}
\date{} 
\thanks{This work was funded by the Deutsche Forschungsgemeinschaft (DFG), grant SCHN 457/17-1, within the priority programme SPP 2298: Theoretical Foundations of Deep Learning. This work was funded by the Deutsche Forschungsgemeinschaft (DFG) under Germany’s Excellence Strategy EXC-2181/1-390900948 (the Heidelberg STRUCTURES Excellence Cluster).}
\keywords{generative models, discrete random variables, normalizing flows, information geometry, neural ODEs, assignment flows, replicator equation}
\subjclass[2010]{49Q22, 53B12, 62H35, 68T05, 68U10, 91A22}
\begin{document}

\begin{abstract}
We introduce a novel generative model for the representation of joint probability distributions of a possibly large number of discrete random variables. The approach uses measure transport by randomized assignment flows on the statistical submanifold of factorizing distributions, which enables to represent and sample efficiently from any target distribution and to assess the likelihood of unseen data points. The complexity of the target distribution only depends on the parametrization of the affinity function of the dynamical assignment flow system.
Our model can be trained in a simulation-free manner by conditional Riemannian flow matching, using the training data encoded as geodesics on the assignment manifold in closed-form, with respect to the e-connection of information geometry.
Numerical experiments devoted to distributions of structured image labelings demonstrate the applicability to large-scale problems, which may include discrete distributions in other application areas. Performance measures show that our approach scales better with the increasing number of classes than recent related work.
\end{abstract}

\maketitle
\tableofcontents

%%%
\section{Introduction}\label{sec:Introduction}
% !TEX root =  ../revision_AFGenerative.tex
%%%%%%%%%%%%%%%%%%%%%

\subsection{Overview, Motivation}
\textit{Generative models} in machine learning define an active area of research \cite{Kobyzev:2019aa,Papamakarios:2021vu,Ruthotto:2021tv}. Corresponding research objectives include 
\begin{enumerate}[(i)]
\item
the representation of complex probability distributions, 
\item 
efficient sampling from such distributions, and 
\item
computing the likelihoods of unseen data points. 
\end{enumerate}
The target probability distribution is typically not given, except for a finite sample set (empirical measure). The modeling task concerns the generation of the target distribution by transporting a simple reference measure, typically the multivariate standard normal distribution, using a corresponding pushforward mapping. This mapping is realized by a network with trainable parameters that are optimized by maximizing the likelihood of the given data or a corresponding surrogate objective which is more convenient regarding numerical optimization. This class of approaches are called \textit{normalizing flows} in the literature.

Discrete joint probability distributions abound in applications, yet have received less attention in the literature on generative models. The recent survey paper \cite{Kobyzev:2019aa} concludes with a short paragraph devoted to discrete distributions and the assessment that ``the generalization of normalizing flows to discrete distributions remains an open problem''. Likewise, the survey paper \cite{Papamakarios:2021vu} briefly discusses generative models of discrete distributions in \cite[Section 5.3]{Papamakarios:2021vu}. The authors state that ``compared to flows on $\R^{D}$, discrete flows have notable theoretical limitations''. The survey paper \cite{Ruthotto:2021tv} does not mention at all generative models of discrete distributions.

This paper introduces a novel generative approach for the significant subclass of \textit{discrete} (\textit{categorial}) probability distributions of $n$ random variables $y_{i}$ taking values in a finite set $\{1,2,\dotsc,c\}$,
\begin{equation}
y = (y_{1},\dotsc,y_{n})^{\T}\in [c]^{n},\qquad
y_{i}\in[c]:=\{1,2,\dotsc,c\},\qquad i\in[n],\qquad c,n \in\N.
\end{equation}
A corresponding distribution $p$ is a look-up table which specifies for any realization $\alpha$ of the discrete random vector $y$ the probability
\begin{equation}\label{eq:def-p-general}
p(\alpha) = p(\alpha_{1},\dotsc,\alpha_{n})
:= \Pr(y=\alpha)
= \Pr(y_{1}=\alpha_{1} \;\wedge\dotsb\wedge \;y_{n}=\alpha_{n}),\qquad \alpha\in[c]^{n}.
\end{equation}
Any such look-up table is a nonnegative tensor with the combinatorially large number  
\begin{equation}\label{eq:def-N-cn}
N := c^{n}
\end{equation}
of entries $p(\alpha),\,\alpha\in[c]^{n}$. Furthermore, since $p(\alpha)\geq 0,\,\forall \alpha$, and $\sum_{\alpha\in[c]^{n}}p(\alpha)=1$, any distribution $p$ also corresponds to a point $p\in\Delta_{N}$ of the probability simplex
\begin{equation}\label{def:Delta-N}
\Delta_{N}:=\{p\in\R_{\geq 0}^{N}\colon \la\eins_{N},p\ra=1\},\qquad
p = (p_{\alpha})_{\alpha\in[c]^{n}},\quad 
p_{\alpha} := p(\alpha),
\qquad
(\text{meta-simplex})
\end{equation}
where $\eins_{N}:=(1,1,\dotsc,1)^{\T}\in\R^{N}$.

Thus, we denote with $p$ discrete joint probability distributions using any of the equivalent representations 
\begin{itemize}
\item as functions $p\colon [c]^{n}\to[0,1]$, cf.~Eq.~\eqref{eq:def-p-general};
\item as nonnegative tensors with $c^{n}$ components $p(\alpha_{1},\dotsc,\alpha_{n})$;
\item as discrete probability vectors $p\in\Delta_{N}$ with $N=c^{n}$ components $p_{\alpha}$, where each component specifies the probability $p_{\alpha}=p(\alpha)=\Pr(y=\alpha)$, cf.~Eq.~\eqref{def:Delta-N}. 
\end{itemize}
In particular, the $N$ vertices (extreme points)  
\begin{equation}\label{eq:def-e-alpha}
e_{\alpha} \in \{0,1\}^{N}
\end{equation}
of $\Delta_{N}$ 
are the unit vectors which encode the discrete Dirac measures $\delta_{\alpha}$ concentrated on the realizations $\alpha\in [c]^{n}$.

\begin{figure}
    \centerline{
    \includegraphics[width=0.35\textwidth]{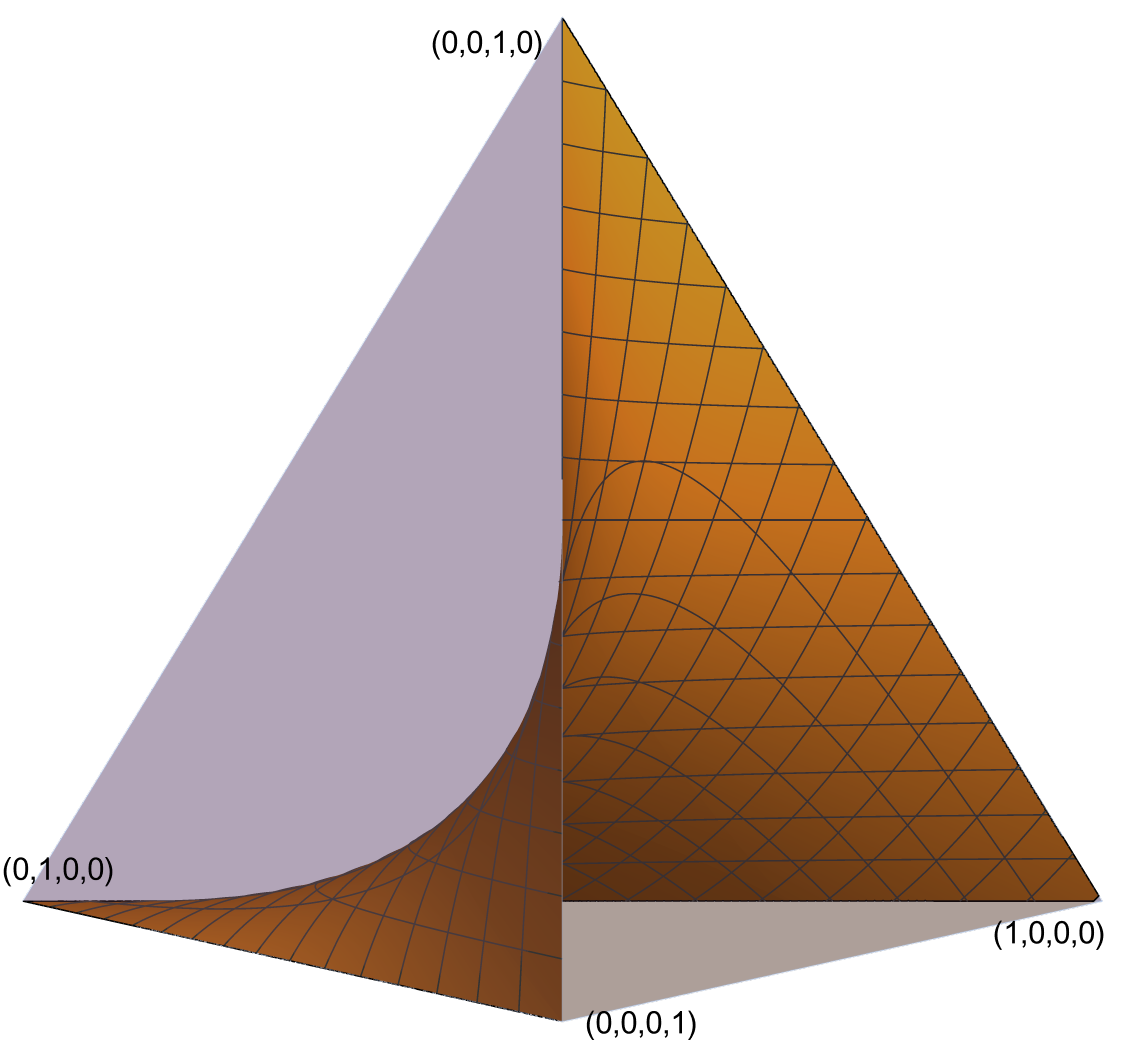}
    \hspace{0.05\textwidth}
    \includegraphics[width=0.35\textwidth]{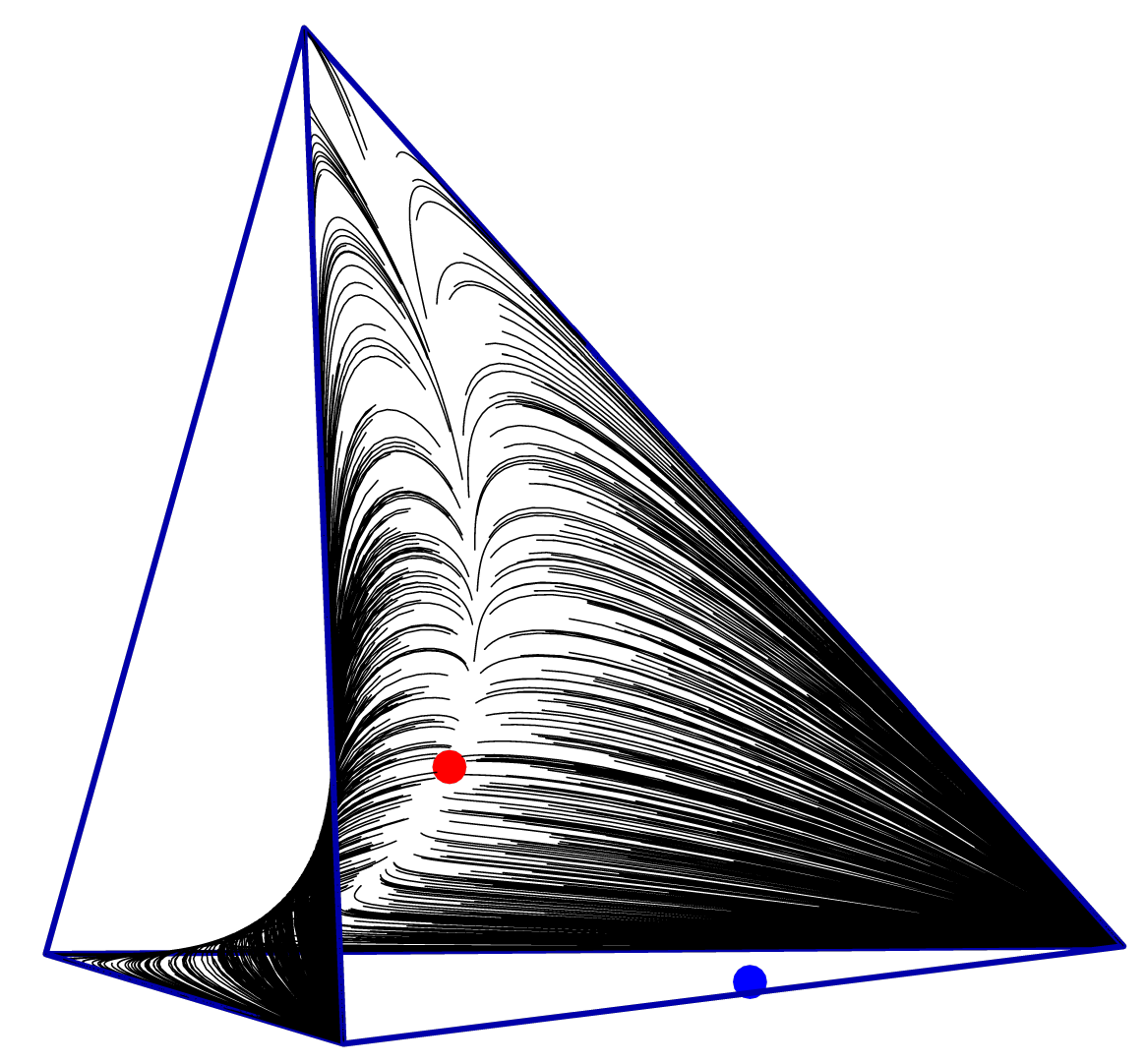}
    }
\centerline{
\parbox{0.35\textwidth}{\centering (a)}
\hspace{0.05\textwidth}
\parbox{0.35\textwidth}{\centering (b)}
}
    \caption{\textbf{(a)} The simplex $\Delta_{N}$ \eqref{def:Delta-N}, for $N=4$, depicted in local coordinates, and the submanifold of factorizing discrete distributions which connects all extreme points of $\Delta_{4}$. \textbf{(b)} Visualization of 1000 samples from the target distribution $p(\alpha_{1},\alpha_{2})$ given by \eqref{eq:py1y2}, corresponding to the blue point $p\in\Delta_{4}$. Each sample corresponds to an integral curve of a flow which evolves on the  submanifold and can be computed efficiently by geometric integration. The parametrized vector field of the dynamical system which generates the flow has been trainined by matching the flow to geodesics on the submanifold which encode given training data. As a result, each component $p_{\alpha}$ of the target distribution corresponds to the relative frequency of integral curves converging to the vertex $e_{\alpha}$, such that the entire distribution $p$ is represented by the convex combination $\sum_{\alpha} p_{\alpha} e_{\alpha} = p$. In this way, the flow realizes the pushforward of a simple reference distribution, centered at $0$ in the tangent space at the barycenter (red point), to the discrete target distribution $p$. Figure \ref{fig:approach} (p.~\pageref{fig:approach}) provides a more detailled illustration of the approach.}
    
    \label{fig:Wright-AF}
\end{figure}

Figure \ref{fig:Wright-AF} illustrates the approach for the toy distribution of two binary variables, i.e.~$c=2$ and $N=2^{2}=4$,
\begin{equation}\label{eq:py1y2}
p(\alpha_{1},\alpha_{2})\colon\qquad
\text{
\begin{tabular}{|c||c|c|}\hline
$\alpha_{1}/\alpha_{2}$ & 0 & 1 \\ \hline
0 & 0.45 & 0.05 \\ \hline
1 & 0.05 & 0.45 \\ \hline
\end{tabular}
}
\end{equation}
The simplex $\Delta_{4}\subset\R^{4}$ \eqref{def:Delta-N}  is visualized in $\R^{3}$ in local coordinates as tetrahedron (Figure \ref{fig:Wright-AF}(a); see Example \ref{ex:WrightManifold} (p.~\pageref{ex:WrightManifold}) for details). The generative model only uses the submanifold of \textit{factorizing} discrete distributions which ensures computational efficiency of both training and sampling. Figure \ref{fig:Wright-AF}(a) shows that this submanifold connects all extreme points of $\Delta_{4}$. 

Figure \ref{fig:Wright-AF}(b) illustrates how sampling from $p$ is accomplished after training, by computing on the submanifold the integral curves of a generating flow which emanates from initial random points, such that each curve converges to a vertex of the simplex which represents a realization $\alpha\sim p(\alpha)$ by \eqref{eq:def-e-alpha}. In this way, a simple reference distribution is pushed forward to $p$. Figure \ref{fig:approach} (p.~\pageref{fig:approach}) gives a more detailed account of the ingredients of our approach.

Training concerns the parameters of the vector field of the dynamical system, which generates the aforementioned flow on the submanifold. This is achieved by matching the flow to closed-form geodesics on the submanifold which encode given training data. This \textit{flow matching approach} has been recently proposed by \cite{Lipman:2023aa,Chen:2023}. Our paper elaborates this approach for \textit{discrete} joint probability distributions using the geometric approach outlined above.

%This paper introduces a novel generative model tailored to discrete distributions
%\begin{itemize}
%\item by pushing forward a simple reference distribution on a submanifold of \blue{factorizing} discrete measures which \blue{entirely `spans' the meta-simplex \eqref{def:Delta-N};}
%\item by determining the parameters of the generative map through matching the flow of a corresponding dynamical system to closed-form geodesics \blue{on the submanifold, which encode given training data.}
%\end{itemize}

%
\subsection{Related Work}\label{sec:Related-Work}

The central theme of our paper are large joint distributions of discrete random variables which has been a core topic in \textit{multivariate} and \textit{algebraic statistics}, with numerous applications in terms of discrete graphical models in various fields. In addition, our paper contributes to research on \textit{generative models} in \textit{machine learning}. Related work is accordingly reported in Sections \ref{sec:related-statistics} and \ref{sec:related-ML}, respectively, in view of own prior work briefly reported in Section \ref{sec:own-work} which combines both viewpoints. The recent related work discussed in Section \ref{sec:related-ML} reflects the fact that generative models for discrete probability distributions has become an active field of research recently.

\subsubsection{Statistics}\label{sec:related-statistics}
Joint distributions of discrete random variables have a long history in multivariate statistics \cite{Agresti:2013aa}. This includes the study of subsets of such distributions known as \textit{discrete graphical models} \cite{Lauritzen:1996aa,Cowell:1999aa,Koller:2009aa}. Here, conditional independency assumptions encoded by the structure of an underlying graph \cite{Studeny:2005aa} effectively reduce the degree of freedoms \eqref{eq:def-N-cn} of general discrete distributions $p$ and imply their factorization of once realizations of conditioning variables are observed. From the algebraic viewpoint, such statistical assumptions about $p$ give rise to monomial constraints. The study of the topology and geometry of the resulting algebraic varieties which support corresponding subfamilies of distributions, is the subject of the fields of \textit{algebraic statistics} \cite{Geiger:2006aa,Lin:2009aa,Drton:2009aa,Zwiernik:2016aa,Sullivant:2018aa}. The special case of fully factorizing discrete distributions
\begin{equation}\label{eq:p-alpha-factorized}
p(\alpha) = \prod_{i\in[n]} p_{i}(\alpha_{i})
\end{equation}
is particularly relevant for this paper. For example, the subfamily of all such distributions for the toy case $n=c=2$,  depicted by Figure \ref{fig:Wright-AF}, is known as Wright manifold in mathematical game theory \cite{Hofbauer:1998aa} and more generally as \textit{Segre variety} $\Sigma_{1,1}$ in algebraic geometry \cite{Harris:1992ub,Landsberg:2012aa}. 

\subsubsection{Own Prior Work}\label{sec:own-work}
Our approach utilizes \textit{assignment flows} \cite{Astrom:2017ac} that evolve on the relative interior of the product of $n$ probability simplices $\Delta_{c}$, called \textit{assignment manifold}, one factor for each random variable $y_{i},\;i\in[n]$ conforming to the factorization \eqref{eq:p-alpha-factorized}. As summarized in Section \ref{sec:AFs}, the restriction to strictly positive discrete distribution with full support  enables to turn these domains into elementary statistical manifolds equipped with the Fisher-Rao geometry and the e-connection of information geometry \cite{Amari:2000aa}. The corresponding exponential map and the geodesics can be specified in closed form. 

Assignment flows are turned into a generative model for discrete random variables as illustrated by Figure \ref{fig:approach}, which generalizes the toy example \eqref{eq:py1y2} and Figure \ref{fig:Wright-AF}: 
Geometric integration of the assignment flow realizes a map which pushes forward a standard reference measure on the tangent space at the barycenter to the extreme points of the (closure) of the assignment manifold. By embedding the assignment manifold into the simplex \eqref{def:Delta-N} of all discrete joint distributions, the pushforward measure concentrates on the extreme points and hence represents a more complex \textit{non-factorizing} discrete joint distribution by convex combination of Dirac measures. 

Our recent work \cite{Boll:2024aa} characterizes assignment flows as multi-population games and studies multi-game dynamics via the aforementioned embedding approach. Some results established in this work regarding the embedding map will be employed in Section \ref{sec:structured_flow_matching}.

\subsubsection{Machine Learning}\label{sec:related-ML}

The lack of work on generative models for \textit{discrete} distributions stated in the survey papers \cite{Kobyzev:2019aa,Papamakarios:2021vu} has stimulated corresponding research recently. 

The paper \cite{Stark:2024aa} employs the parametric Dirichlet distribution on the probability simplex \cite{Ferguson:1973aa,Johnson:1977aa,Aitchinson:1982vt} as intermediate conditional distributions in a flow matching approach. 
A similarity to our method is the use of infinite transport time, which achieves favorable scaling in the regime of many classes.
A detailed comparison is discussed in Section~\ref{sec:comparison_to_stark}.

The paper \cite{Davis:2024aa} refers to \cite{Astrom:2017ac} and a preliminary version \cite{Boll:2024ab} of our generative model and uses geodesics with respect to the Riemannian connection rather than the e-connection, corresponding to $\alpha=0$ and $\alpha=1$ in the family of $\alpha$-connections, respectively \cite{Amari:2000aa}. By virtue of the sphere map \cite[Def.~1]{Astrom:2017ac} as isometry, the former geodesics on the simplex correspond to the geodesics (great circles) on the sphere with radius 2, restricted to the intersection with the open positive orthant. The authors of [DKP+24] argue that their approach avoids numerical instability at the boundary of the manifold, which is indeed relevant when working on the sphere. However, this issue does not arise on the simplex either, provided that proper geometric numerical integration schemes are used, as demonstrated in [ZSPS20]. The focus of \cite{Davis:2024aa} is on improving the training dynamics using optimal transport, due to the close relation on the simplex of the geometry induced by the Wasserstein distance and the Fisher-Rao geometry \cite{Li:2018ad}. 

Another line of research, called \textit{dequantization},  concerns the approximation of \textit{discrete} probability distributions by \textit{continuous} distributions \cite{Uria:2013,Theis:2015,Dinh:2017,Salimans:2017,Ho:2019}. A dequantization approach for general discrete data, i.e.~similar in scope to our approach, was recently proposed by \cite{Chen:2022aa}. We discuss this paper in Section \ref{sec:dequantization} and point out differences by showing that our approach can be characterized as dequantization procedure. In particular, we indicate that a key component of the approach \cite{Chen:2022aa}, learning an embedding of class configurations, can be replicated using our approach, by defining an payoff function of our generative assignment flow approach accordingly.

Regarding the training of our generative model, our approach builds on the recent work \cite{Lipman:2023aa,Chen:2023}.
The authors introduced a \textit{flow-matching approach} to the training of continuous generative models which enables more stable and efficient training and hence an attractive alternative to established maximum likelihood training. We adopt this criterion and adapt it to our generative model for discrete distributions and the underlying geometry. In particular, we encode given training data as e-geodesics on the assignment manifold which makes flow matching convenient and effective. 

%\subsubsection{Contribution: Summary}
%Summing up, the generative model introduced in this paper seamlessly combines a flow-matching approach with a Riemannian geometric structure tailored to represent discrete distributions and to generate discrete and categorial data. A preliminary announcement of our approach is \cite{Boll:2024ab}. The present paper considerably elaborates this work regarding the presentation of the approach, mathematical details, experiments and discussion of closely related work.

\subsection{Organization}  
Section \ref{sec:basic-notation} fixes the basic notation. Section \ref{sec:Background} summarizes the assignment flow approach and specifies the flow embedding into the simplex \eqref{def:Delta-N}, along with mappings and their properties required in the remainder of this paper.

The core Section \ref{sec:Approach} introduces and details our approach. Section \ref{sec:Generative-Model} introduces the generative model. 
The flow-matching approach is described in Section \ref{sec:learning} and how it relates to the recent work \cite{Lipman:2023aa,Chen:2023} which inspired the training component of our approach. Section \ref{sec:numerics} details the particular geometric integration used in all experiments for computing the assignment flow, based on the methods worked out by \cite{Zeilmann:2020aa}. Section \ref{sec:Likelihood} explains how the trained generative model is evaluated for computing the likelihoods of a novel unseen data points. Section \eqref{sec:dequantization} explains dequantization and characterizes our approach from this viewpoint.

Experimental results are presented and discussed in Section \ref{sec:Experiments}. We conclude in Section \ref{sec:Conclusion}.

\subsection{Basic Notation, List of Main Symbols}\label{sec:basic-notation}
We set $[n]:=\{1,2,\dotsc,n\}$ for $n\in\N$. The canonical Euclidean inner product as well as the matrix inner product which induces the Frobenius norm, are denoted by $\la\cdot,\cdot\ra$. The mapping $\Diag(\cdot)$ takes a vector to the diagonal matrix with the vector component as main diagonal entries. $e_{k},\,k\in\N$, denotes a unit vector with single non-zero $k$-th component equal to $1$ and dimension, that is clear from the context.

\textbf{Data, labelings.} $\mc{G}=(\mc{V},\mc{E})$, with vertex set $\mc{V}=[n]$,  denotes an arbitrary graph on which data $x_{i}$ are observed at every vertex $ i\in\mc{V}$. $c\in\N$ possible class labels of the data $x_{i}$ are represented by discrete random variables $y_{i}\in[c]$. Realizations of the variables $y_{i}$ are denoted by $\alpha_{i}\in [c]$. This results in $N=c^{n}$ labelings configurations $\alpha=\{\alpha_{1},\dotsc,\alpha_{n}\}$ for given data $x=\{x_{1},\dotsc,x_{n}\}$.

\textbf{Assignment flows, dynamical labelings.} The probability simplex is denoted by 
\begin{equation}\label{eq:def-Delta-n}
\Delta_{n}=\{p\in\R_{\geq 0}^{n}\colon\la\eins_{n},p\ra=1\}, 
\end{equation}
where $\eins_{n}=(1,1,\dotsc,1)^{\T}\in\R^{n}$. \textit{Assignment flows} (Section \ref{sec:AFs}) work with the relative interior $\mathring{\Delta}_{c}$ of $\Delta_{c}$, denoted by $\mc{S}_{c}:=\mathring{\Delta}_{c}$, containing the strictly positive probability vectors of dimension $c$, and with the $n$-fold product conforming to $\mc{V}$,
\begin{equation}\label{eq:def-mcW}
\mc{W}_{c}
:= \mc{S}_{c}\times\dotsb\times\mc{S}_{c},
\qquad\qquad
(\text{$n=|\mc{V}|$ factors})
\qquad\qquad (\text{assignment manifold})
\end{equation}
Points on $\mc{W}$ are denoted by 
\begin{equation}
W
= (W_{1},\dotsc,W_{n})^{\T}\in \mc{W}_{c} \subset \R_{>0}^{n\times c},\qquad W_{i}\in\mc{S}_{c},\qquad i\in[n].
\end{equation}
The evolution $W(t)$ of these assignment vectors, obtained by integrating the assignment flow equation, determines the label assignments $\alpha_{i}$ to the data point $x_{i}$ at every $i\in\mc{V}$, by convergence to the corresponding unit vectors
\begin{equation}\label{eq:lim-t-Wt}
\lim_{t\to\infty} W_{i}(t)=e_{\alpha_{i}}\in\{0,1\}^{c},\quad i\in\mc{V},
\end{equation}
which are the extreme points of the closure of the assignment manifold $\ol{\mc{W}_{c}}$. Further spaces and mappings defined in connection with assignment flows in Section \ref{sec:AFs} are: The tangent spaces $T_{0}, \mc{T}_{0}$ to $\mc{S}_{c}, \mc{W}_{c}$ with orthogonal projections $\pi_{0}, \Pi_{0}$, the barycenters $\eins_{\mc{S}}, \eins_{\mc{W}}$ of $\mc{S}_{c}, \mc{W}_{c}$, the Fisher-Rao metric $g_{p}, g_{W}$ on $T_{0}, \mc{T}_{0}$, the replicator maps $R_{p}, R_{W}$ and the lifting maps $\exp_{p}, \exp_{W}$ which play the role of exponential maps.

Besides the underlying geometry, the essential part of the assignment flow equation, whose integration results in \eqref{eq:lim-t-Wt}, is the
\begin{equation}\label{eq:def-affinity-function}
F_{\theta}\colon\mc{W}_{c}\to\R^{n\times c},
\qquad\qquad(\text{affinity function})
\end{equation}
whose parameters $\theta$ are learned from data.

\textbf{Meta-simplex, assignment manifold embedding.} We overload the symbol $p$ to denote discrete probability distributions using any of the equivalent representations specified after Eq.~\eqref{def:Delta-N}, as well as discrete probability vectors whose dimension should be unambigous from the context. Major examples are $p\in\mc{S}_{c}\subset\R_{\geq 0}^{c}$ and $p\in\Delta_{N}$ (cf.~\eqref{def:Delta-N}).

Since the embedding
\begin{equation}\label{eq:TWc}
\mc{T} := T(\mc{W}_{c}) \subset\mc{S}_{N}:=\mathring{\Delta}_{N}
\qquad\qquad(\text{meta-simplex embedding})
\end{equation}
of the assignment manifold defined in Section \eqref{sec:Meta-Simplex} yields the submanifold of factorizing distributions in $\Delta_{N}$, as depicted for a toy scenario by Figure \ref{fig:Wright-AF}(a), we call $\Delta_{N}$ as defined by \eqref{def:Delta-N} ``meta-simplex'', to distinguish the product of simplices $\mc{W}_{c}$ \eqref{eq:def-mcW} before and after the embedding $T(\mc{W}_{c})$ \eqref{eq:TWc}.

We denote by
\begin{equation}
\mc{P}(\mc{S}_{c}), \; \mc{P}(\mc{W}_{c}), \; \text{etc.}
\end{equation}
the set of probability measures supported on the space $\mc{S}_{c}, \mc{W}_{c}$, etc.

%%%
\section{Background}\label{sec:Background}
% !TEX root =  ../revision_AFGenerative.tex
%%%%%%%%%%%%%%%%%%%%%

Section \ref{sec:AFs} defines spaces and mappings required in the remainder of the paper. Section \ref{sec:Meta-Simplex} defines a key ingredient of our approach, the embedding \eqref{eq:TWc} and related mappings. We refer to the basic notation introduced in Section \ref{sec:basic-notation}.

\subsection{Assignment Flows}\label{sec:AFs}
The basic state space of discrete distributions is the relative interior of the probability simplex
\begin{subequations}
\begin{align}
\label{eq:def-mcSc}
\mc{S}_{c}
&:=\mathring{\Delta}_{c} = \{p\in\R^{c}\colon p_{j}>0,\;\la\eins_{c},p\ra=1,\;\forall j\in[c]\}
\intertext{with its}
\eins_{\mc{S}} 
&:= \frac{1}{c}\eins_{c}\in\mc{S}_{c},
&&
(\text{barycenter})
\intertext{
which becomes the Riemannian manifold $(\mc{S}_{c},g)$ with trivial tangent bundle $T\mc{S}_{c} = \mc{S}_{c}\times T_{0}$, comprising the}
\label{eq:def-T0}
T_{0} &:= T_{\eins_{\mc{S}}}\mc{S}_{c} 
:= \{v\in\R^{c}\colon \la\eins_{c},v\ra=0\}
&&
(\text{tangent space})
\intertext{with the orthogonal projection}\label{eq:def-pi0}
\pi_{0}\colon \R^{c} &\to T_{0},\qquad
\pi_{0} := I_{c}-\eins_{c}\eins_{\mc{S}}^{\T}
&&(\text{orthogonal projection})
\intertext{and carrying the}
\label{eq:FR-simplex}
g_{p}(u,v) 
&:= \la u,\Diag(p)^{-1} v\ra,\qquad u,v\in T_{0},\quad p\in\mc{S}_{c}.
&&
(\text{Fisher-Rao metric})
\end{align}
\end{subequations}
This naturally extends to the product manifold $(\mc{W}_{c},g)$ given by \eqref{eq:def-mcW}, with trivial tangent bundle $T\mc{W}_{c}=\mc{W}_{c}\times\mc{T}_{0}$, and
\begin{subequations}
\begin{align}
\eins_{\mc{W}} 
&= (\eins_{\mc{S}},\dotsc,\eins_{\mc{S}})^{\T},
&&(\text{barycenter})
\\
\label{eq:def-mcT0}
\mc{T}_{0}
&:= T_{\eins_{\mc{W}}}\mc{W}_{c} := T_{0}\times\dotsb\times T_{0},
\qquad(\text{$n=|\mc{V}|$ factors})
&&(\text{tangent space})
\intertext{with points denoted by}
V
&= (V_{1},\dotsc,V_{n})^{\T}\in\R^{n\times c}
\in\mc{T}_{0},\qquad
V_{i}\in T_{0},\qquad i\in[n],
\intertext{the orthogonal projection}
\label{eq:def-Pi0}
\Pi_{0}\colon\R^{n\times c} &\to\mc{T}_{0},\qquad
\Pi_{0} U := (\pi_{0} U_{1},\dotsc, \pi_{0} U_{n})^{\T}
&&(\text{orthogonal projection})
\intertext{and the}\label{eq:def-gW}
g_{W}(U,V) &= \sum_{i\in[n]} g_{W_{i}}(U_{i},V_{i}),\qquad 
U, V\in\mc{T}_{0},\quad W\in\mc{W}_{c}.
&&(\text{Fisher-Rao metric})
\end{align}
\end{subequations}
\textit{Assignment flows} are dynamical systems of the general form
\begin{equation}\label{eq:AF-general}
\dot W(t) = R_{W(t)}\big[F_{\theta}\big(W(t)\big)\big],\qquad
W(0) = W_{0} \in\mc{W}_{c},
\qquad\qquad(\text{assignment flow})
\end{equation}
parametrized by an affinity function \eqref{eq:def-affinity-function} and comprising the linear mappings
\begin{subequations}\label{eq:def-replicator-maps}
\begin{align}
R_{p}\colon\R^{c}&\to T_{0},\qquad
R_{p} = \Diag(p)-p p^{\T},\qquad p\in\mc{S}_{c}
&&(\text{replicator map})
\\
R_{W}\colon\R^{n\times c}&\to\mc{T}_{0},\qquad
R_{W}[F_{\theta}] = (R_{W_{1}}F_{\theta,1},\dotsc,R_{W_{n}}F_{\theta,n})^{\T},\qquad W\in\mc{W}_{c}.
&&(\text{replicator map})
\end{align}
\end{subequations}
The \textit{exponential maps} with respect to the e-connection reads
\begin{subequations}\label{eq:def-Exp}
\begin{align}
    \Exp_{p}(v) &= \frac{p\cdot e^{\frac{v}{p}}}{\la p, e^{\frac{p}{v}}\ra},
& p&\in\mc{S}_{c},\quad v\in T_{0},
\\
\Exp_{W}(V) &= \big(\Exp_{W_{1}}(V_{1}),\dotsc,\Exp_{W_{n}}(V_{n})\big)^{\T}
& W&\in\mc{W}_{c},\quad V\in\mc{T}_{0},
\end{align}
\end{subequations}
where both the multiplication $\cdot$ and the exponential function apply componentwise. Composition with the replicator maps \eqref{eq:def-replicator-maps} yields the
\begin{subequations}\label{eq:def-lifting-maps}
\begin{align}
\exp_{p}\colon T_{0}&\to\mc{S}_{c},\qquad\;
\exp_{p} := \Exp_{P}\circ R_{p},\qquad\qquad p\in\mc{S}_{c},
&&(\text{lifting map})
\label{eq:def-lifting-p} \\ \label{eq:def-lifting-W}
\exp_{W}\colon\mc{T}_{0}&\to\mc{W}_{c},\qquad
\exp_{W} := \Exp_{W}\circ R_{W},\qquad W\in\mc{W}_{c}.
&&(\text{lifting map})
\end{align}
\end{subequations}

\subsection{Meta-Simplex, Flow Embedding}\label{sec:Meta-Simplex}
The embedding \eqref{eq:TWc} is defined by the map 
\begin{equation}\label{eq:def-T-embedding}
T\colon\mc{W}_{c}\to\mc{T}=T(\mc{W}_{c}) \subset\mc{S}_{N},\qquad
T(W)_{\alpha} := \prod_{i\in[n]} W_{i,\alpha_{i}},\qquad
\alpha\in[c]^{n}.
\end{equation}
Denoting the tangent space to $\mc{S}_{N}$ defined by \eqref{eq:TWc} by
\begin{equation}
\mc{T}_{0}\mc{S}_{N} := \{z\in\R^{N}\colon \la\eins_{N},z\ra=0\},
\qquad\qquad
(\text{meta-tangent space})
\end{equation}
we also require the map
\begin{equation}\label{eq:def-Q-embedding}
Q\colon\R^{n\times c}\to\R^{N},\qquad
Q\colon \mc{T}_{0}\to \mc{T}_0\mc{S}_{N},\qquad
(QV)_{\alpha} := \sum_{i\in[n]} V_{i,\alpha_{i}},\qquad
\alpha\in[c]^{n}.
\end{equation}
The mappings $T, Q$ have been studied by \cite{Boll:2023aa,Boll:2024aa}.

Every point $W\in\mc{W}$ on the assignment manifold is represented through \eqref{eq:def-T-embedding} by the combinatorially large vector $T(W)$ with $N=c^{n}$ components $T(W)_{\alpha}$, consisting of monomials of degree $n$ in the variables $W_{i,\alpha_{i}}\in (0,1)$. A labeling  determined by the assignment flow by \eqref{eq:lim-t-Wt} corresponds to 
\begin{equation}
\lim_{t\to\infty} T\big(W(t)\big) 
= T\big((e_{\alpha_{1}},\dotsc,e_{\alpha_{n}})^{\T}\big)=e_{\alpha},
\end{equation}
that is, the unit vector (vertex) of the meta-simplex $\Delta_{N}=\ol{\mc{S}_{N}}$ corresponding to the Dirac measure $\delta_{\alpha}$ concentrated on the labelling $\alpha \in [c]^{n}$.
\begin{example}\label{ex:WrightManifold}
We reconsider the toy scenario \eqref{eq:py1y2} of joint distributions of two binary variables. Such distributions correspond on the assignment manifold to points of the form
\begin{equation}
W = \Big(\bsm w_{1} \\ 1-w_{1} \esm, \bsm w_{2} \\ 1-w_{2} \esm\Big)^{\T},\qquad
w_{1}, w_{2}\in (0,1).
\end{equation}
Embedding this point by  \eqref{eq:def-T-embedding} yields the vector 
\begin{equation}
T(W)=\big(w_{1} w_{2}, w_{1}(1-w_{2}), (1-w_{1}) w_{2}, (1-w_{1})(1-w_{2})\big)^{\T}, 
\end{equation}
with components $T(W)_{\alpha}$ indexed by the four possible labeling $\alpha\in\{(1,1),(1,0),(0,1),(0,0)\}$. Since any distribution on the assignment manifold factorizes, this vector is determined by merely two parameters $w_{1}, w_{2}$. Accordingly, the embedded assignment manifold $\mc{T}=T(\mc{W}_{c})\subset\mc{S}_{N}$ is the two-dimensional submanifold depicted by Figure \ref{fig:Wright-AF}(a).

In mathematics, such embedded sets are known as \textit{Segre varieties} at the intersection of algebraic geometry and statistics \cite{Lin:2009aa, Drton:2009aa}.
\end{example}
%
%\begin{figure}
%\centerline{
%\includegraphics[width=0.35\textwidth]{Figures/WrightManifold}
%}
%\caption{
%Visualization of $\mc{T}=T(\mc{W})\subset\mc{S}_{N}$ for $n=c=2$ (Example \ref{ex:WrightManifold}).
%}
%\label{fig:WrightManifold}
%\end{figure}
%
The following proposition highlights the specific role of the submanifold of $\mc{S}_{N}$ corresponding to the \textit{embedded assignment manifold} $\mc{T}=T(\mc{W})\subset\mc{S}_{N}$.
\begin{proposition}[{\cite[Prop.~3.2]{Boll:2024aa}}]
For every $W\in\mc{W}_{c}$, the distribution $T(W)\in\mc{S}_{N}$ has maximum entropy 
\begin{equation}
H\big(T(W)\big) = -\sum_{\alpha\in[c]^{n}} T(W)_{\alpha}\log T(W)_{\alpha}
\end{equation}
among all $p\in\mc{S}_{N}$ subject to the marginal constraint
\begin{subequations}
\begin{align}\label{eq:marginal-p}
M p &= W,
\intertext{where the marginalization map is given by}\label{eq:def-M-map}
M\colon\R^{N}\to\R^{n\times c},
\qquad
(M p)_{i,j} &:= \sum_{\alpha\in[c]^{n}\colon \alpha_{i}=j} p_{\alpha},\qquad
\forall (i,j)\in [n]\times [c].
\end{align}
\end{subequations}
\end{proposition}
As a consequence, any \textit{general} distribution 
$p\in\mc{S}_{N}\setminus T(\mc{W}_{c})$ 
which is \textit{not} in $T(\mc{W}_{c})$, has \textit{non}-maximal entropy and hence is \textit{more} informative by encoding additional statistical dependencies \cite{Cover:2006aa}. 

Our approach for \textit{generating} such general distributions $p\in\mc{S}_{N}$, by combining simple factorizing distributions $W\in\mc{W}_{c}$ via the embedding \eqref{eq:def-T-embedding} and assignment flows \eqref{eq:AF-general}, is introduced in following Section \ref{sec:Approach}.

%%%
\section{Approach}\label{sec:Approach}
% !TEX root =  ../revision_AFGenerative.tex
%%%%%%%%%%%%%%%%%%%%%

Section \ref{sec:Generative-Model} introduces our generative model for representing and learning a discrete joint distribution $p=p(\alpha)\in \mc{S}_N$ of label configurations $\alpha=(\alpha_{1},\dotsc,\alpha_{n})$ as realizations of discrete random variables $y=(y_{1},\dotsc,y_{n})\sim p$. The approach is illustrated by Figure~\ref{fig:approach}. The training procedure for simulation-free training of the generative model is worked out in Section \ref{sec:learning}. Section \ref{sec:structured_flow_matching} specifies precisely how the approximation of $p$ is achieved in the meta-simplex by measure transport on the embedded nonlinear submanifold of factorizing distributions.

We conclude with short Sections \ref{sec:numerics}--\ref{sec:dequantization} on the geometric integration method that we employed for the discretization of our time-continuous generative model in numerical experiments, on the computation of the likelihood $\wt{p}(\alpha)$ of arbitrary label configurations using the learned generative model, and on the characterization of our approach as a dequantization procedure.

\begin{figure}
    \centering
    \includegraphics[width=0.5\textwidth]{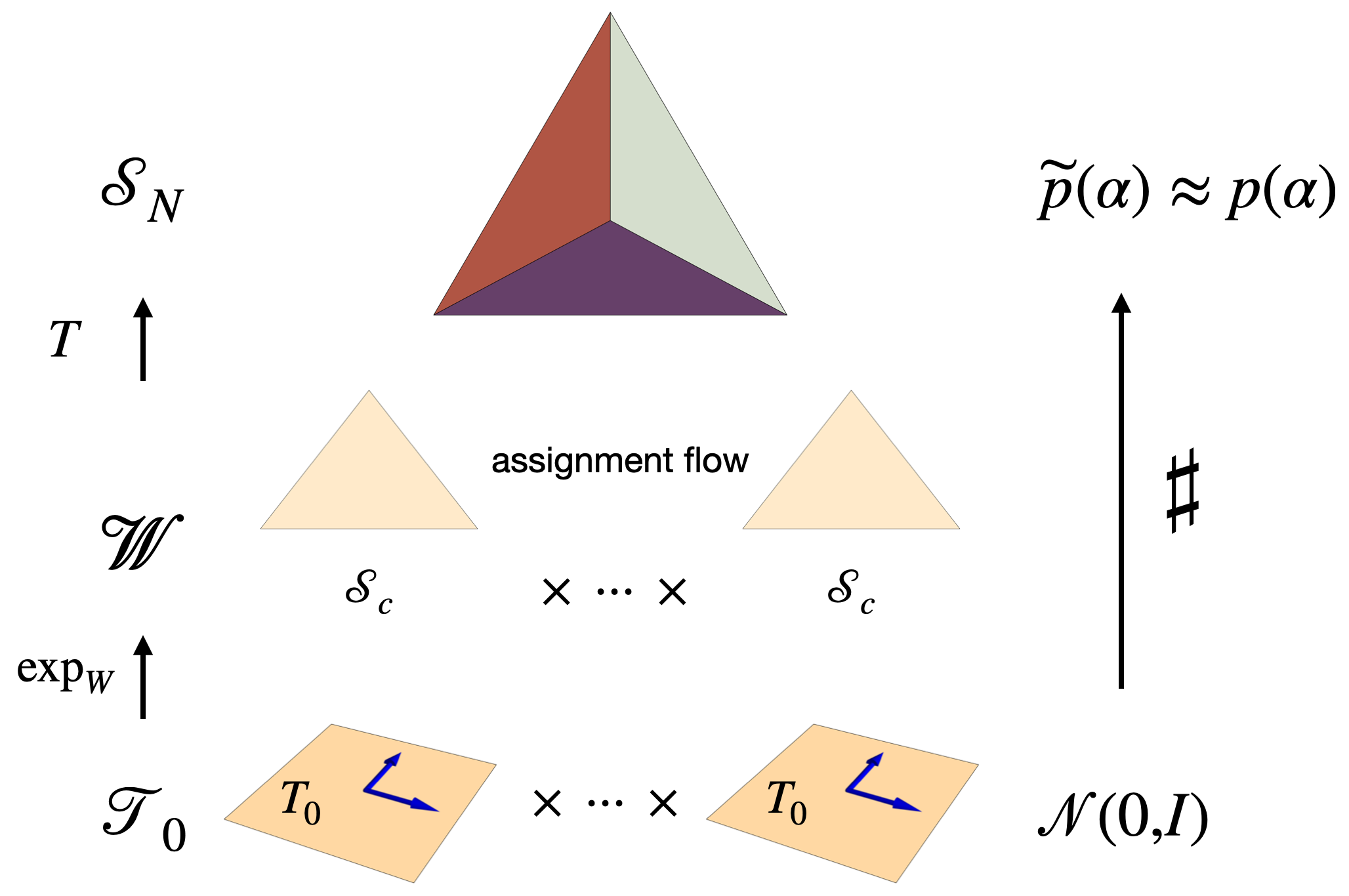}
    \caption{\textbf{Overview of the approach:} The standard Gaussian reference measure $\mc{N}(0,I)$ is pushed forward by the lifting map $\exp_{W}$ from the flat tangent product space $\mc{T}_{0}$ to the assignment manifold $\mc{W}_{c}$, and further to the meta-simplex $\mc{S}_{N}$ via the embedding map $T$ \eqref{eq:def-T-embedding}, by geometrically integrating the assignment flow equation \eqref{eq:AF-general}. Since the assignment flow converges to the extreme points of $\ol{\mc{W}_{c}}$ which after embedding agree with the extreme points of $\Delta_{N}=\ol{\mc{S}_{N}}$, an approximation $\wt{p}(\alpha)$ of a general \textit{discrete} target measure $p(\alpha)$ can be learned in terms of a corresponding convex combination of extreme points. This is achieved by matching the flow of e-geodesics which encode given training samples to the generating assignment flow, by empirical expectation, and by learning the parameters of the affinity function $F_{\theta}$ \eqref{eq:def-affinity-function}. Since factorizing distributions $T(W),\, W\in\mc{W}_{c}$, are only required, the approach is computationally feasible also in high dimensions.}
    \label{fig:approach}
\end{figure}

\subsection{Generative Model}\label{sec:Generative-Model}
\subsubsection{Goal}
The goal is to learn an approximation 
\begin{equation}
\wt{p}\approx p, 
\qquad\qquad
(\text{approximation})
\end{equation}
as convex combination of \textit{factorizing} joint distributions. The submanifold $\mc{T}=T(\mc{W}_{c})\subset \mc{S}_N$ shown in Figure \ref{fig:Wright-AF}(a) spans all factorizing distributions $T(W)\in \mc{S}_N$, which are efficiently represented by their marginals $W\in \mc{W}_{c}$ due to \eqref{eq:marginal-p}. In particular, since the dimension of $\mc{W}_{c}$ only grows linearly in the number of variables $n$, factorizing distributions are tractable to work with numerically. 
However, only \textit{independent} random variables follow factorizing distributions, posing the question of how statistical \textit{coupling} between such variables can be represented through convex combination. 

\subsubsection{Representation of General Distributions}
Note that the submanifold of factorizing distributions $\mc{T}\subseteq\mc{S}_N$ is \textit{nonconvex}. Thus, convex combinations of two factorizing distributions $T(W_1)$ and $T(W_2)$ generally lie \textit{outside} of $\mc{T}$ and hence form a \textit{non-factorizing} distribution.

In addition, we observe that every Dirac measure $e_\alpha$ factorizes. Intuitively, this is because each variable has a deterministic value, independent of all others.
%and it corresponds to the fact that (the closure of) $\mc{T}$ spans all extreme points of $\mc{S}_N$ (see Figure~\ref{fig:WrightManifold}). 
Because Dirac measures are the extreme points of the convex set $\ol{\mc{S}_N}$, \emph{every} joint distribution $\wt{p}\in \mc{S}_N$ representing an \emph{arbitrary} coupling between variables can be written as a convex combination of Dirac measures
\begin{equation}\label{eq:dirac_convex_comb}
    \wt{p} = \sum_{\alpha\in [c]^n} \wt{p}_\alpha e_\alpha.
\end{equation}
This particular representation of $\wt{p}$ is intractable, however, because it involves a combinatorially large number of mixture coefficients $\wt{p}_\alpha$.
To tame this complexity, the \textbf{key idea} is to \textit{represent mixtures $\wt{p}\in \mc{S}_N$ of factorizing distributions as measures $\nu\in \mc{P}(\mc{W}_{c})$} by
\begin{equation}\label{eq:mixture_representation_nu}
    \wt{p} = \EE_{W\sim\nu}[T(W)].
\end{equation}
This shifts the problem of parameterizing useful subsets of combinatorially many mixture coefficients in \eqref{eq:dirac_convex_comb} to the problem of parameterizing a preferably large subset of measures $\nu\in \mc{P}(\mc{W}_{c})$, supported on the comparatively low-dimensional manifold $\mc{W}_{c}$. 
%In addition, \textit{labelings} of \textit{real} data can be expected to have \textit{small} support, i.e.~a fraction of extreme points of $\mc{S}_{N}$ is only required to represent $p$. Section \ref{sec:feasibility} elaborates this point to some degree.
The latter can be achieved by \textit{parameterized measure transport} on the \textit{assignment manifold} $\mc{W}_{c}$. 

Specifically, a simple reference measure 
\begin{subequations}
\begin{align}
\nu_0&\in \mc{P}(\mc{W}_{c})
&&(\text{reference measure})
\intertext{
is chosen and transported by the assignment flow \eqref{eq:AF-general}, reaching 
}
\nu &= \nu_\infty\quad\text{for}\quad
t\to\infty.
&&(\text{transported measure})
\intertext{
\textit{Parameterization} of measures 
}
\nu_\theta&\in \mc{P}(\mc{W})
&&(\text{parametrized measure})
\end{align}
\end{subequations}
is achieved by choosing an appropriate class of affinity functions $F_\theta\colon \mc{W}\to\R^{n\times c}$ \eqref{eq:def-affinity-function} driving the assignment flow \eqref{eq:AF-general}. 
Note that, while the support of $\wt{p}$ in \eqref{eq:dirac_convex_comb} was directly associated with the number of mixture coefficients, the complexity of representing $\wt{p}$ via the ansatz \eqref{eq:mixture_representation_nu} is no longer associated with its support. 
%A first reason to consider is that $T(W)$ has full support whenever $W$ has full support. 

The simplest example of \eqref{eq:mixture_representation_nu} is the representation of 
\begin{equation}\label{eq:wtp-eins-SN}
\wt{p}=\eins_{\mc{S}_{N}}
\end{equation}
by choosing $F_\theta \equiv 0$ and a \textit{product} reference distribution 
\begin{equation}\label{eq:nu0-product}
\nu_0 
= \prod_{i\in [n]} \nu_{0;i} \in\mc{P}(\mc{W}_{c})
\qquad\qquad
\end{equation}
with mean $\EE_{W_i\sim \nu_{0;i}}[W_i] = \eins_{\mc{S}_c}$, which through the embedding \eqref{eq:mixture_representation_nu} yields \eqref{eq:wtp-eins-SN}, which has \textit{full} support on the very high-dimensional space $[c]^{n}$. We make this connection more explicit.
\begin{lemma}[\textbf{convex combination of embedded nodewise measures}]\label{lem:nodewise_nu}
Suppose the reference measure $\nu_{0}$ has the product form \eqref{eq:nu0-product} 
with $\nu_{i}\in \mc{P}(\mc{S}_c)$. Then the joint distribution represented by the mixture \eqref{eq:mixture_representation_nu} reads
\begin{equation}\label{eq:mixture_independent_measures}
    \wt{p} = \EE_{W\sim\nu}[T(W)] = T(\wh{W}),\qquad \wh{W}_i = \EE_{W_i\sim \nu_i}[W_i],\quad i\in [n].
\end{equation}
\end{lemma}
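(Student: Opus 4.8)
The statement to prove is that for a product measure $\nu = \prod_{i\in[n]}\wh{\nu}_i$ on $\mc{W} = \mc{S}_c^n$, the induced joint distribution $\wt{p} = \EE_{W\sim\nu}[T(W)]$ equals $T(\wh{W})$ where $\wh{W}_i$ is the mean of $\wh{\nu}_i$. The plan is to verify this componentwise, i.e.\ to show that $\wt{p}_\alpha = T(\wh{W})_\alpha$ for every multi-index $\alpha\in[c]^n$. Fixing $\alpha$, I would expand $\wt{p}_\alpha = \EE_{W\sim\nu}[T(W)_\alpha]$ using the definition \eqref{eq:def-T-embedding} of the embedding map, which gives $\EE_{W\sim\nu}\big[\prod_{i\in[n]} W_{i,\alpha_i}\big]$.

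The key step is then to push the expectation through the product. Since $\nu$ is the product measure $\prod_{i}\wh{\nu}_i$, the coordinate blocks $W_1,\dots,W_n$ are independent under $\nu$, and consequently the scalar random variables $W_{1,\alpha_1},\dots,W_{n,\alpha_n}$ are independent as well (each $W_{i,\alpha_i}$ is a measurable function of $W_i$ alone). By independence, the expectation of the product factorizes as
\begin{equation}
    \EE_{W\sim\nu}\Big[\prod_{i\in[n]} W_{i,\alpha_i}\Big] = \prod_{i\in[n]} \EE_{W_i\sim\wh{\nu}_i}[W_{i,\alpha_i}] = \prod_{i\in[n]} (\wh{W}_i)_{\alpha_i},
\end{equation}
where the last equality uses the definition $\wh{W}_i = \EE_{W_i\sim\wh{\nu}_i}[W_i]$ together with linearity of expectation applied coordinatewise. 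The right-hand side is exactly $T(\wh{W})_\alpha$ by \eqref{eq:def-T-embedding}, which completes the computation once we note $\alpha$ was arbitrary. It then only remains to observe that $\wh{W}_i\in\mc{S}_c$, so that $T(\wh{W})$ is well-defined: this follows because $\mc{S}_c$ (the relative interior of the simplex) is convex, hence a barycenter of a probability measure supported on it lies in it — strictly speaking one should check that the mean stays in the \emph{relative interior}, which holds since $\wh{\nu}_i$ is supported on $\mc{S}_c$ and the integral of strictly positive coordinates against a probability measure is strictly positive.

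I do not anticipate a genuine obstacle here; the only thing requiring mild care is the justification that expectation commutes with the finite product, which is just the standard fact that expectations of independent integrable random variables multiply (all variables are bounded, lying in $[0,1]$, so integrability is automatic). A secondary bookkeeping point is making sure the notation in the statement is read correctly — the measures indexing the means should be $\wh{\nu}_i$ rather than $\wt{\nu}_i$ as written, but this is immaterial to the argument. The proof is therefore essentially a one-line application of Fubini/independence after unfolding the definition of $T$.
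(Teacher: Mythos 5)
Your proposal is correct and follows essentially the same route as the paper: fix a multi-index $\alpha$, unfold $T(W)_\alpha$ as the product $\prod_i W_{i,\alpha_i}$, and use independence under the product measure $\nu$ to factor the expectation, yielding $\prod_i (\wh{W}_i)_{\alpha_i} = T(\wh{W})_\alpha$. Your extra remark that $\wh{W}_i$ remains in $\mc{S}_c$ is a harmless addition the paper leaves implicit.
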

\begin{proof}
Let $\alpha \in [c]^n$ be an arbitrary multi-index. Since $\nu$ factorizes in the described manner, $W\sim \nu$ is independently distributed on each node which implies
\begin{equation}\label{eq:nodewise_nu_proof}
\wt{p}_{\alpha} = \EE_{W\sim\nu}[T(W)_\alpha] = \EE_{W\sim\nu}\Big[\prod_{i\in [n]} W_{i,\alpha_i}\Big]
    = \prod_{i\in [n]} \EE_{W\sim\nu_i}[W_{i,\alpha_i}] 
    = T(\wh{W})_\alpha.
\end{equation}
\end{proof}
% There is also a simple payoff function which represents the uniform distribution as a mixture of Diracs. 
% For $F_\theta(W) = W$, the assignment flow \eqref{eq:AF-general} does not induce coupling between nodes. More precisely, for $t\to\infty$, this flow merely rounds its initialization on each node (separately) to the nearest extreme point of $\mc{S}_c$.
% \todo[inline]{Citation? Was this result on single-simplex flows previously published?}
% Let $\wh{\nu}$ denote the pushforward measure of $\wh{\nu}_0$ under this flow in the limit $t\to\infty$. Then
% \begin{equation}\label{eq:nodewise_uniform_mixture}
%     \wh{\nu} = \sum_{j\in [c]} \frac{1}{c}\delta_{e_j} \in \mc{P}(\mc{S}_c)
% \end{equation}
% due to the described rounding behavior and, because $\nu$ is composed of $n$ independent factors, we conclude
% \begin{equation}\label{eq:nodewise_uniform_mixture2}
%     \nu = \prod_{i\in [n]} \sum_{j\in [c]} \frac{1}{c}\delta_{e_j} 
% \end{equation}
Lemma~\ref{lem:nodewise_nu} shows that, if $\nu$ is independent on every node, then $\wt{p}\in \mc{T}$. In particular, 
%In the following, the reference measure $\nu_0$ will have this independence property, which illustrates that 
\textit{coupling} between variables, to be represented by the joint distribution $\wt{p}$, has necessarily to be induced by the \textit{interaction} of node states over the course of integrating the assignment flow.

\subsubsection{Model Learning and Model Evaluation (Sampling)}
The target distribution $p$ is unknown, in practice, and only independently drawn training samples $\beta\sim p$ are available. 
After choosing a class of payoff functions $F_{\theta}$, the task is to learn parameters $\theta$ such that 
\begin{equation}
\wt{p} = \EE_{W\sim\nu_\theta}[T(W)],
\end{equation}
i.e.~a parametrization of the right-hand side of \eqref{eq:mixture_representation_nu}, 
approximates the empirical distribution of samples. To this end, we identify samples $\beta$ with the corresponding extremal points $M e_{\beta}\in\ol{\mc{W}_{c}}$ (Section~\ref{sec:RM-equation}) and use \emph{flow matching} on $\mc{W}_{c}$ to learn $\theta$ in a numerically stable and efficient way (Section~\ref{sec:learning}).
%Learning the generation of label configurations $\beta\sim p$ is accomplished by learning the parameters $\theta$ of the vector field $V_{\veps,\theta}$ which generates the AF, such that the standard normal distribution $\mc{N}(0,I)$ centered at the origin of the tangent space $\mc{T}_{0}$ is pushed forward on $\mc{W}$, via the lifting map \eqref{eq:def-exp} and the modified assignment flow, to the Dirac measure $\delta_{\beta}$ located at $\beta\in\mc{W}$. Representing these measure evolutions on $\mc{W}$ by e-geodesics in closed form, enables to estimate the parameters $\theta$ in a numerically stable and efficient way by minimizing a corresponding flow-matching training criterion (Section \ref{sec:learning}).

After learning has converged, new samples $\beta\sim\wt{p}$ from the approximate distribution $\wt{p}\approx p$ can be drawn by a two-stage process: 
\begin{enumerate}[(i)]
\item
First, an initialization $W_0\sim \nu_0$ is drawn and evolved over time $W(t) \in \mc{W}_{c}$ by integrating the learned assignment flow until either the desired time $t_{\max}$ is reached, or $W(t)$ approaches an extreme point of $\ol{\mc{W}_{c}}$. 
\item
The new data is subsequently drawn from the factorizing distribution $T(W(t_{\max}))$. At extreme points $M e_{\beta'}$, this distribution is a Dirac measure and sampling from it always yields $\beta'$.
\end{enumerate}
%Finally, we specify the geometric integration method that we employed for the discretization of our time-continuous generative model in numerical experiments (Section~\ref{sec:numerics}), and the computation of the likelihood $\wt{p}(\alpha)$ of arbitrary label configurations using the learned generative model (Section~\ref{sec:Likelihood}).

\subsection{Riemannian Flow Matching}
\label{sec:learning}

in this section, we work out details of the procedure for training generative assignment flows.
\subsubsection{Representation of Labelings as Training Data}
\label{sec:RM-equation}
Our approach to training the generative model utilizes \textit{labelings} as training data of the form
\begin{equation}
\ol{W}\in\ol{\mc{W}_{c}},\qquad
\ol{W}_{i} = e_{\alpha_{i}},\qquad \alpha_{i}\in[c],\qquad\forall i\in[n].
\end{equation}
Any such point $\ol{W}$ assigns a label (category) $\alpha_{i}$ to each vertex $i\in\mc{V}$ in terms of a corresponding unit vector $e_{\alpha_{i}}\in\{0,1\}^{c}$. 
The flow-matching criterion, specified in the following section, is optimized to find $\theta$ such that $F_\theta$ drives the assignment flow to labelings in the limit $\lim_{t\to\infty} W(t) = \ol{W}$. 
In practice, the assignment flow is integrated up to a sufficiently large point of time
\begin{equation}\label{eq:def-Tmax}
t_{\max} > 0
\end{equation}
followed by trivial rounding of $W_{i}(t_{\max})\mapsto e_{\alpha_{i}}$ at every vertex $i$. 

\subsubsection{Training Criterion}\label{sec:training-criterion}
This section details the approach schematically depicted by Figure \ref{fig:approach}. In the following, 
\begin{equation}
\beta\sim p
\end{equation}
denotes labeling configurations for training, drawn from the \textit{unknown} underlying discrete joint data distribution $p$. $\beta$ corresponds to the Dirac measure $e_{\beta}\in\mc{S}_{N}$ (extreme point) of the meta-simplex $\mc{S}_{N}$ and to a corresponding point $\ol{W}_{\beta} = M e_{\beta} \in\ol{\mc{W}_{c}}$, to which the assignment flow \eqref{eq:AF-general} may converge.
 
The idea of flow matching is to directly fit the model vector field, in our case the assignment flow vector field \eqref{eq:AF-general},
\begin{equation}\label{eq:def-V-theta}
V_\theta(W, t) := R_{W}[F_\theta(W, t)],
\end{equation}
to a vector field whose flow realizes a desired measure transport. 
Let $\nu_0\in \mc{P}(\mc{W})$ be a simple reference measure and define \textit{conditional probability paths} 
\begin{equation}
\nu_t(\beta)
\end{equation}
satisfying
\begin{subequations}\label{eq:nu_conditional_endpoints}
\begin{align}
 \nu_{0}(\beta) &:=  \nu_{0}
\label{eq:nu0-conditional} \\ 
\label{eq:nu1-conditional}
 \nu_{\infty}(\beta) 
 &:= \lim_{t\to\infty}  \nu_t(\beta) = \delta_{\ol{W}_{\beta}}(W)\quad
 \text{for all}\quad \beta\in [c]^n,
\end{align}
\end{subequations}
where $\ol{W}_{\beta} = (e_{\beta_{1}},\dotsc, e_{\beta_{n}})^{\T}\in\{0,1\}^{n\times c}$ is the extreme point of $\ol{\mc{W}_{c}}$ corresponding to $\beta$, such that $T(\ol{W}_{\beta})=e_{\beta}\in\ol{\mc{S}_{N}}$. Then the \textit{marginal probability path}
\begin{equation}\label{eq:marginal_distr_from_conditionals}
     \nu_t = \EE_{\beta\sim p}[ \nu_t(\beta)]
\end{equation}
represents the target data distribution $p$ in the limit $t\to\infty$ by $\nu_{\infty}$ and
\begin{equation}\label{eq:marginal_limit_representation}
%p_\infty(W) := \lim_{t\to\infty}\,p_t(W),\qquad
\EE_{W\sim  \nu_\infty}[T(W)] 
= \EE_{\beta\sim p}[e_\beta] = p.
\end{equation}

In principle, we can now define a vector field 
\begin{equation}
u_t\colon \mc{W}_{c}\to \mc{T}_{0}
\end{equation}
which generates the path $t\mapsto\nu_t$ in the sense that the flow of $u_t$ pushes forward $\nu_0$ to $\nu_t$, for all times $t\geq 0$. Let $\rho\in \mc{P}([0,\infty))$ be a distribution with full support on the non-negative time axis.
Regression of the assignment flow vector field \eqref{eq:def-V-theta}, 
\begin{equation}
V_{\theta}(\cdot,t)\colon \mc{W}_{c}\to \mc{T}_{0}, 
\end{equation}
with respect to $u_t$, amounts to minimizing the \textbf{Riemannian flow matching criterion}
\begin{equation}\label{eq:riemannian-flow-matching}
    \mc{L}_\text{RFM}(\theta) = \EE_{t\sim\rho, W\sim \nu_{t}}\Big[\big\|u_{t}(W)-V_\theta(W, t)\big\|_{W}^{2}\Big],
\end{equation}
where $\|\cdot\|_{W}^{2} = g_{W}(\cdot,\cdot)$ (cf.~\eqref{eq:def-gW}). 
%\todo[inline]{Mention that this only learns in the support of $p_t$ and connection to outlier detection?}

In this form, flow matching is intractable, however, because we do not have access to the required field $u_t$.  
On the other hand, since we are at liberty to define \textit{conditional} paths that conform to the constraints \eqref{eq:nu_conditional_endpoints}, we can choose $\nu_t(\beta)$ that are generated by \textit{conditional} vector fields $u_t(\cdot|\beta)$ with \textit{known} form.
The key insight in \cite{Chen:2023}, based on \cite{Lipman:2023aa} and provided that each $\nu_t(\beta)$ is generated by $u_t(\cdot|\beta)$, is that the loss function \eqref{eq:riemannian-flow-matching} has the same gradient with respect to $\theta$ as the \textbf{Riemannian \textit{conditional} flow matching criterion}
\begin{subequations}\label{eq:L-RCFM}
\begin{align}
\mc{L}_{\mrm{RCFM}}(\theta) 
&= \EE_{t\sim\rho, \beta\sim p, W\sim \nu_t(\beta)}\,\Big[\big\|u_t(W|\beta) - V_{\theta}(W,t)\big\|_W^2\Big]
\\
&\overset{\eqref{eq:def-V-theta}}{=} \EE_{t\sim\rho, \beta\sim p, W\sim \nu_t(\beta)}\,\Big[\big\|u_t(W|\beta) - R_{W}[F_\theta(W, t)]\big\|_W^2\Big].
\end{align}
\end{subequations}
By contrast to \eqref{eq:riemannian-flow-matching}, conditional vector fields $u_{t}(W|\beta)$ generating a path \begin{equation}\label{eq:paths-nu-t-beta}
t\mapsto \nu_t(\beta)
\end{equation}
with the required properties \eqref{eq:nu_conditional_endpoints} can be specified in closed form (cf.~Proposition \ref{eq:condflow-model} below), and the conditional loss function \eqref{eq:L-RCFM} can be evaluated efficiently. Ultimately, by minimizing \eqref{eq:L-RCFM}, the measure $\nu_{t}$ generated from the reference measure $\nu_0$ by the assignment flow vector field $R_{W}[F_\theta(W, t)]$ approximates $\nu_\infty$ in the limit $t\to\infty$, which represents the unknown data distribution $p$ through \eqref{eq:marginal_limit_representation}.

\subsubsection{Constructing Conditional Fields}\label{sec:conditional_fields}
This section specifies the conditional vector fields $u_{t}(W|\beta)$ that generate the paths \eqref{eq:paths-nu-t-beta} conforming to \eqref{eq:nu_conditional_endpoints} and define the conditional flow matching objective \eqref{eq:L-RCFM}.

Let 
\begin{equation}\label{eq:def-mcN0}
\mc{N}_{0}(V):=\mc{N}(V;0,\Pi_0)
\end{equation}
denote the standard Gaussian centered in the tangent space at $0\in\mc{T}_{0}$, with the orthogonal projection \eqref{eq:def-Pi0} respresenting the identity map on $\mc{T}_{0}\subset\R^{n\times c}$. 
Pushing forward $\mc{N}_{0}$ by the lifting map \eqref{eq:def-lifting-W} at the barycenter yields a simple \textit{reference distribution} 
\begin{equation}\label{eq:def-nu0}
\nu_0 = (\exp_{\eins_{\mc{W}}})_{\sharp}\mc{N}_{0}\in \mc{P}(\mc{W}).
\end{equation}
The distribution \eqref{eq:def-nu0} is \textit{simple} in the sense that it is easy to draw samples and the conditions of Lemma~\ref{lem:nodewise_nu} are satisfied; in particular, $\nu_0$ \textit{factorizes} node-wise.
For each labeling $\beta\in [c]^n$ and the corresponding extreme point $\ol{W}_{\beta}=(e_{\beta_{1}},\dotsc, e_{\beta_{n}})\in\ol{\mc{W}_{c}}$, and a 
\begin{equation}
\lambda > 0, 
\qquad\qquad(\text{rate parameter})
\end{equation}
define the probability path
\begin{equation}\label{eq:tangent_gaussian_path}
    t\mapsto \mc{N}_{t, \beta} := \mc{N}(\cdot;t \lambda V_\beta,\Pi_0)\in \mc{P}(\mc{T}_{0}),\qquad V_\beta := \Pi_0 \ol{W}_\beta,
\end{equation}
and lift it to $\mc{W}_{c}$, defining
\begin{equation}\label{eq:nu_cond_lifted_gauss}
    \nu_t(\beta) := (\exp_{\eins_{\mc{W}}})_{\sharp} \mc{N}_{t, \beta}.
\end{equation}
The parameter $\lambda$ controls the \textit{rate} at which $\nu_t(\beta)$ moves probability mass closer to $\ol{W}_\beta$. Small values of $\lambda$ move the mass slowly; this is useful in settings with many labels $c \gg 1$, enabling the process to make class decisions during a longer time period. Figure \ref{fig:rate-parameter-lambda} illustrates quantitatively the influence of $\lambda$.
\begin{figure}[htbp]
    \centering
    \includegraphics[width=1.\textwidth]{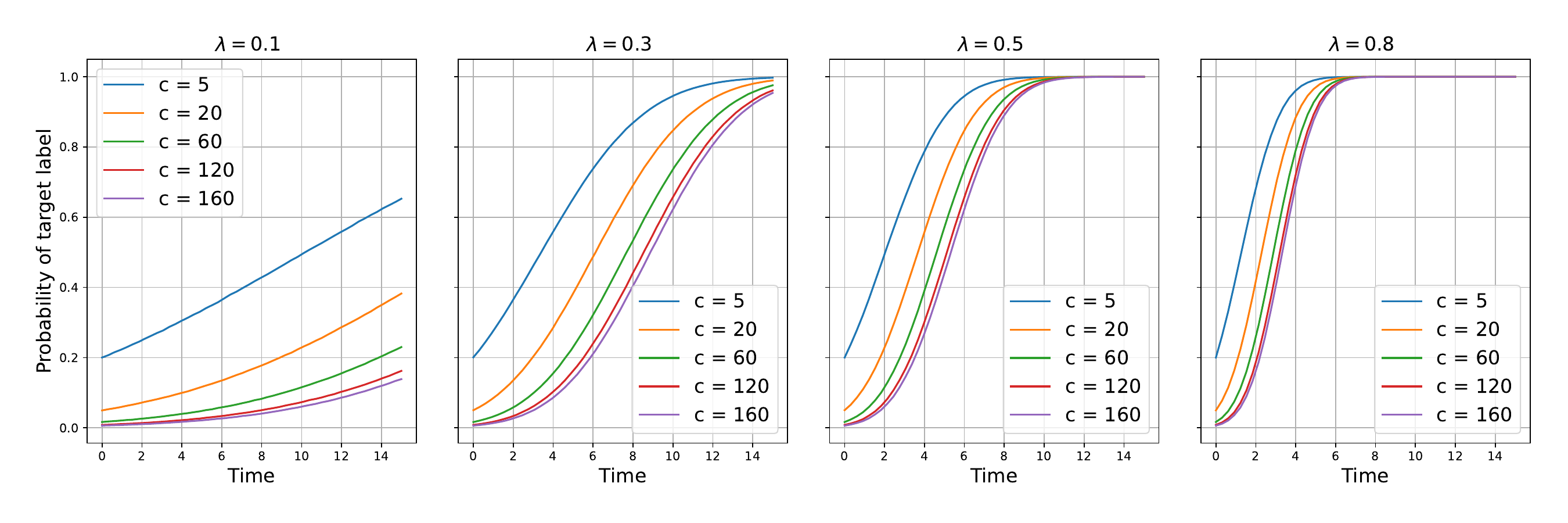}
\caption{Influence of the parameter $\lambda$  controlling in \eqref{eq:tangent_gaussian_path} and \eqref{eq:condvectorfield-model}, respectively, the \textit{rate} of assignment of mass of the pushforward probability measure \eqref{eq:nu_cond_lifted_gauss} to a target label, depending on the number $c$ of labels (classes, categories).}
    \label{fig:rate-parameter-lambda}
\end{figure}

% In addition, define the cone 
% \begin{equation}\label{eq:beta_labeling_cone}
% \wt{r}_\beta = \{V\in \mc{T}\colon V_{i,\beta_i} \geq V_{i,j}\text{ for all } i\in [n]\text{ and } j\neq \beta_i\}\subseteq \mc{T}.
% \end{equation}
% From the definition \eqref{eq:def-exp} of the lifting map, we see that 
% \begin{equation}\label{eq:beta_labeling_sector}
% r_\beta = \exp_{\eins_{\mc{W}}}(\wt{r}_\beta)\subseteq \mc{W}
% \end{equation}
% is the set of assignment matrices that assign the largest probability to the label $\beta_i$ for each $i\in [n]$. 

The following proposition makes explicit the conditional vector field $u_{t}(W|\beta)$ that generates \eqref{eq:nu_cond_lifted_gauss} and hence defines the training objective function \eqref{eq:L-RCFM}. Recall the notation of Section \ref{sec:Meta-Simplex} and the first paragraph of Section \ref{sec:training-criterion} explaining the one-to-one correspondence between 
\begin{itemize}
\item a labelling configuration $\beta$, 
\item the corresponding Dirac measure $e_{\beta}\in\mc{S}_{N}$ of the meta simplex, and 
\item the corresponding point $\ol{W}_{\beta}\in\ol{\mc{W}_{c}}$ of the closure of the assignment manifold.
\end{itemize}

%\begin{quote}
%\textcolor{red}{
%The use of $F_{\lambda}$ below is confusing: Readers may confuse ot with $F_{\theta}$, $V_{\theta}$ above and will not understand the relation to $V_{\beta}$ in \eqref{eq:tangent_gaussian_path}.
%}
%\end{quote}
%%
\begin{proposition}[\textbf{conditional vector fields}]\label{prop:interpolants}
The probability paths defined in \eqref{eq:nu_cond_lifted_gauss} are generated through the smooth flow
\begin{equation}\label{eq:condflow-model}
    \psi_{\cdot} (\cdot | \beta) \colon \R_{\ge0} \times \mc T_0 \to \mc{W}_{c}, 
    \qquad
    \psi_{t}\big(V | {\beta}\big)  = \exp_{\eins_{\mc W}} (V + t\lambda V_{\beta}).
\end{equation}
It is invertible and has the smooth inverse  
\begin{equation}\label{eq:condflow-inv-model}
    \psi_{t}^{-1}(W| \beta) =\exp_{\eins_{\mc W}}^{-1}(W)-t \lambda V_\beta .
\end{equation}
In particular, the conditional vector field that generates \eqref{eq:nu_cond_lifted_gauss} is given by
\begin{align}\label{eq:condvectorfield-model}
    u_{t}(W |  {\beta}) = R_{W}[\lambda V_\beta].
\end{align}
\end{proposition}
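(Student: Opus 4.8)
The plan is to verify the three claims in turn: (i) that $\psi_t$ defined by \eqref{eq:condflow-model} is a smooth flow whose inverse is \eqref{eq:condflow-inv-model}, (ii) that pushing forward $p_0$ along $\psi_t$ produces exactly the conditional path $p_t(\cdot\mid\beta)$ of \eqref{eq:nu_cond_lifted_gauss}, and (iii) that the vector field generating this flow is $u_t(W\mid\beta)=R_W[\lambda V_\beta]$. The crux is step (iii); the other two are essentially bookkeeping with the lifting map.

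First I would record that $\exp_{\eins_{\mc W}}\colon\mc T_0\to\mc W$ is a diffeomorphism. From \eqref{eq:def-exp} we have $\exp_p=\Exp_p\circ R_p$, and at the barycenter $\eins_{\mc S}=\tfrac1c\eins_c$ one has $R_{\eins_{\mc S}}=\tfrac1c\pi_0$, which is a linear isomorphism of $T_0$ onto $T_0$; composing with the e-exponential $\Exp_{\eins_{\mc S}}$, whose domain is all of $T_0$ and which is a diffeomorphism onto $\mc S_c$ (see the discussion following \eqref{eq:def-Exp}), shows $\exp_{\eins_{\mc S}}$ is a diffeomorphism, hence so is its $n$-fold product $\exp_{\eins_{\mc W}}$. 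Consequently $\psi_t(V)=\exp_{\eins_{\mc W}}(V+t\lambda V_\beta)$ is, for each fixed $t$, a diffeomorphism of $\mc T_0$ onto $\mc W$ as a composition of the translation $V\mapsto V+t\lambda V_\beta$ with $\exp_{\eins_{\mc W}}$; smoothness in $(t,V)$ jointly is clear, $\psi_0=\exp_{\eins_{\mc W}}$, and the stated inverse \eqref{eq:condflow-inv-model} follows by solving $W=\exp_{\eins_{\mc W}}(V+t\lambda V_\beta)$ for $V$. For (ii), by definition of pushforward, $(\psi_t)_\sharp p_0=(\psi_t)_\sharp(\exp_{\eins_{\mc W}})_\sharp\mc N_0=(\psi_t\circ\exp_{\eins_{\mc W}}{}^{-1})_\sharp\,(\exp_{\eins_{\mc W}})_\sharp\mc N_0$; but $\psi_t\circ\exp_{\eins_{\mc W}}^{-1}$ is just the translation $W'\mapsto\exp_{\eins_{\mc W}}(\exp_{\eins_{\mc W}}^{-1}(W')+t\lambda V_\beta)$, equivalently, at the level of tangent coordinates, $V\mapsto V+t\lambda V_\beta$, so $(\psi_t)_\sharp p_0=(\exp_{\eins_{\mc W}})_\sharp\,(\text{shift by }t\lambda V_\beta)_\sharp\mc N_0=(\exp_{\eins_{\mc W}})_\sharp\mc N(\cdot\,;t\lambda V_\beta,\Pi_0)=(\exp_{\eins_{\mc W}})_\sharp\mc N_{t,\beta}=p_t(\cdot\mid\beta)$, using that a Gaussian with covariance $\Pi_0$ is translation-covariant. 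This also confirms the endpoint conditions \eqref{eq:nu_conditional_endpoints}: at $t=0$ we get $p_0$, and as $t\to\infty$ the mean $t\lambda V_\beta=t\lambda\Pi_0\ol W_\beta$ drives $\exp_{\eins_{\mc W}}$ to the vertex $\ol W_\beta$, so the pushed-forward Gaussian concentrates at $\delta_{\ol W_\beta}$.

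For (iii), the generating vector field is by definition $u_t(W\mid\beta)=\frac{\partial}{\partial s}\Big|_{s=t}\psi_s\big(\psi_t^{-1}(W)\big)$. Writing $V=\psi_t^{-1}(W)=\exp_{\eins_{\mc W}}^{-1}(W)-t\lambda V_\beta$, we have $\psi_s(V)=\exp_{\eins_{\mc W}}(V+s\lambda V_\beta)$, so by the chain rule $u_t(W\mid\beta)=D\exp_{\eins_{\mc W}}\big|_{V+t\lambda V_\beta}[\lambda V_\beta]=D\exp_{\eins_{\mc W}}\big|_{\exp_{\eins_{\mc W}}^{-1}(W)}[\lambda V_\beta]$. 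It therefore suffices to show that for every $W\in\mc W$ and every $A\in\mc T_0$,
\begin{equation}\label{eq:dexp-identity}
D\exp_{\eins_{\mc W}}\big|_{\exp_{\eins_{\mc W}}^{-1}(W)}[A]=R_W[A].
\end{equation}
This is the key computational fact, and it reduces factorwise to the single-simplex statement $D\exp_{\eins_{\mc S}}\big|_{\exp_{\eins_{\mc S}}^{-1}(w)}[a]=R_w[a]$ for $w\in\mc S_c$, $a\in T_0$. I would prove the latter by a direct computation: parametrize $w=\exp_{\eins_{\mc S}}(v)=\Exp_{\eins_{\mc S}}(\tfrac1c\pi_0 v)$ and differentiate the closed form \eqref{eq:def-Exp}, or — more cleanly — invoke the known fact (from the assignment-flow literature, e.g.\ the construction of the lifting map in \cite{Astrom:2017ac}) that $\exp_p$ satisfies $\frac{d}{dt}\exp_p(tv)\big|_{t=0}=R_p v$ together with the homogeneity structure of the e-geodesic/lifting map that makes the differential at a general point equal to $R_{\exp_p(v)}$ composed with the appropriate identification; the cleanest route is to observe that the curve $t\mapsto\exp_{\eins_{\mc S}}(v+ta)$ is, by \eqref{eq:def-exp} and the defining property of the replicator field, a reparametrized solution of $\dot w=R_w[\,\cdot\,]$, whose velocity at parameter giving $w$ is exactly $R_w[a]$. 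I expect \eqref{eq:dexp-identity} to be the main obstacle: it is the one place where the specific interplay between the e-connection exponential, the replicator matrix $R_W$, and the barycenter base point must be pinned down, whereas everything else is formal manipulation of pushforwards and translations. Once \eqref{eq:dexp-identity} is in hand, combining it with the chain-rule expression for $u_t$ yields $u_t(W\mid\beta)=R_W[\lambda V_\beta]$, which is \eqref{eq:condvectorfield-model}, completing the proof.
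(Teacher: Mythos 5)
Your proposal is correct and follows essentially the same route as the paper: establish that $\exp_{\eins_{\mc W}}$ is a diffeomorphism so the inverse \eqref{eq:condflow-inv-model} follows, note that translation by $t\lambda V_\beta$ maps the Gaussian $\mc{N}_0$ to $\mc{N}_{t,\beta}$ so the lifted pushforward is \eqref{eq:nu_cond_lifted_gauss}, and recover $u_t$ via $u_t(W|\beta)=\frac{d}{dt}\psi_t\big(\psi_t^{-1}(W)\big)$ together with the key fact that $\frac{d}{dt}\exp_{\eins_{\mc W}}(V+t\lambda V_\beta)=R_{\psi_t(V)}[\lambda V_\beta]$, which is exactly the paper's \eqref{eq:diff-interpolant}. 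Your identity $D\exp_{\eins_{\mc W}}\big|_{\exp_{\eins_{\mc W}}^{-1}(W)}[A]=R_W[A]$ is only sketched (and the curve $t\mapsto\exp_{\eins_{\mc S}}(v+ta)$ is in fact an exact, not reparametrized, solution of $\dot w=R_w[a]$, as a one-line differentiation of $\exp_p(u)=p\cdot e^{u}/\la p,e^{u}\ra$ shows), but this matches the level of detail at which the paper itself asserts that step.
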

\begin{proof}
See Appendix \ref{sec:proofs-conditional-fields}, page \pageref{sec:proofs-conditional-fields}.
\end{proof}

\begin{proposition}[\textbf{conditional path constraints}]\label{prop:cond_path_constraints}
The conditional probability paths $\nu_t(\beta)$ defined by \eqref{eq:nu_cond_lifted_gauss} satisfy the constraints \eqref{eq:nu_conditional_endpoints}.
\end{proposition}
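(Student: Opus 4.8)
The plan is to verify the two endpoint requirements \eqref{eq:nu0-conditional} and \eqref{eq:nu1-conditional} separately; the first is immediate, and the second reduces, after making the lifting map at the barycenter explicit, to a bounded-convergence argument. For $t=0$: by \eqref{eq:tangent_gaussian_path} the tangent-space Gaussian is $\mc{N}_{0,\beta} = \mc{N}(\cdot;0,\Pi_0) = \mc{N}_0$ (see \eqref{eq:def-mcN0}), and pushing it forward by $\exp_{\eins_{\mc{W}}}$ gives, via \eqref{eq:nu_cond_lifted_gauss} and \eqref{eq:def-nu0}, $p_0(\cdot|\beta) = (\exp_{\eins_{\mc{W}}})_\sharp\mc{N}_0 = p_0$, independently of $\beta$, which is exactly \eqref{eq:nu0-conditional}.

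For $t\to\infty$: writing $V\sim\mc{N}_0$, the measure $\mc{N}_{t,\beta}$ is the law of the affine image $V + t\lambda V_\beta$, so by \eqref{eq:nu_cond_lifted_gauss} and \eqref{eq:condflow-model} the measure $p_t(\cdot|\beta)$ is the law of $\psi_t(V) = \exp_{\eins_{\mc{W}}}(V + t\lambda V_\beta)$ for $V\sim\mc{N}_0$. I would then show that $\psi_t(V)\to\ol{W}_\beta$ as $t\to\infty$ for each fixed $V\in\mc{T}_0$; since $\ol{\mc{W}}$ is compact, bounded convergence upgrades this at once to $\int f\,dp_t(\cdot|\beta) = \EE_{V\sim\mc{N}_0}[f(\psi_t(V))]\to f(\ol{W}_\beta)$ for every $f\in C(\ol{\mc{W}})$, i.e., narrow convergence $p_t(\cdot|\beta)\to\delta_{\ol{W}_\beta}$, which is the meaning of \eqref{eq:nu1-conditional}.

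The one genuine computation is the pointwise limit of $\psi_t(V)$, and I would obtain it by making $\exp_{\eins_{\mc{W}}}$ explicit at the barycenter: from \eqref{eq:def-RSi} with $\eins_{\mc{S}} = \tfrac{1}{c}\eins_c$ one gets $R_{\eins_{\mc{S}}} = \tfrac{1}{c}\pi_0$, so \eqref{eq:def-Exp}--\eqref{eq:def-exp} collapse to the node-wise softmax $(\exp_{\eins_{\mc{W}}}(V))_i = \mathrm{softmax}(V_i)$. Next, by \eqref{eq:tangent_gaussian_path}, \eqref{eq:def-M-map} and \eqref{eq:def-pi0} one has $(V_\beta)_i = \pi_0 e_{\beta_i} = e_{\beta_i} - \eins_{\mc{S}}$; since $\mathrm{softmax}$ is invariant under adding a multiple of $\eins_c$ to its argument, $(\psi_t(V))_i = \mathrm{softmax}(V_i + t\lambda e_{\beta_i})$. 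As $t\to\infty$ with $\lambda>0$, the $\beta_i$-th coordinate of this argument tends to $+\infty$ while the other $c-1$ coordinates stay equal to $V_{i,j}$, forcing $\mathrm{softmax}(V_i + t\lambda e_{\beta_i})\to e_{\beta_i} = (\ol{W}_\beta)_i$ for every $i\in[n]$, hence $\psi_t(V)\to\ol{W}_\beta$.

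I do not anticipate a real obstacle here. The only points needing a little care are getting the softmax form of $\exp_{\eins_{\mc{W}}}$ and its shift-invariance right, and phrasing the conclusion as narrow convergence of measures on the \emph{closure} $\ol{\mc{W}}$ (so that $\delta_{\ol{W}_\beta}$, supported on the boundary $\ol{\mc{W}}\setminus\mc{W}$, is a legitimate limit); compactness of $\ol{\mc{W}}$ makes the bounded-convergence step free of any integrability concern.
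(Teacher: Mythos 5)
Your proposal is correct, and it reaches the conclusion by a somewhat different mechanism than the paper. The paper also dispatches $t=0$ as immediate, but for $t\to\infty$ it factors the path into node-wise marginals, applies the change-of-variables formula to the pushforward under $\psi_{t;i}$, and shows that the resulting lifted-Gaussian density at every fixed $S\neq \ol{W}_{\beta;i}$ decays to zero as $t\to\infty$ (the quadratic form in the exponent blows up because the mean $t\lambda V_{\beta;i}$ escapes to infinity), concluding that each marginal limit is the Dirac at $\ol{W}_{\beta;i}$. You instead work with the random-variable representation $p_t(\cdot|\beta)=\mathrm{law}\big(\exp_{\eins_{\mc{W}}}(V+t\lambda V_\beta)\big)$, make $\exp_{\eins_{\mc{W}}}$ explicit as the node-wise softmax (your computation $R_{\eins_{\mc{S}}}=\tfrac1c\pi_0$, the cancellation in \eqref{eq:def-Exp}--\eqref{eq:def-exp}, and shift-invariance are all right, as is $(V_\beta)_i=e_{\beta_i}-\eins_{\mc{S}}$), prove almost-sure pointwise convergence $\psi_t(V)\to\ol{W}_\beta$, and upgrade to narrow convergence by bounded convergence on the compact closure $\ol{\mc{W}}$. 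What your route buys is a cleaner and arguably more rigorous reading of the limit \eqref{eq:nu1-conditional} as weak convergence of measures: the paper's pointwise density statement on the open simplex does not by itself identify at which boundary point the escaping mass accumulates, whereas your trajectory-wise argument pins this down directly and the compactness of $\ol{\mc{W}}$ removes any integrability concern. What the paper's route buys is an explicit handle on the evolving densities themselves (via the constant Jacobian of $\psi_{t;i}^{-1}$ and the Gaussian form), which is in the same spirit as the change-of-variables machinery used later for likelihood computation. No gap in your argument.
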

\begin{proof}
See Appendix \ref{sec:proofs-conditional-fields}, page \pageref{proof-3.3}.
\end{proof}

The path $\mc{N}_{t}$ is generated on the tangent space $\mc{T}_{0}$ by the constant vector field $V\mapsto \lambda V_\beta$ given by \eqref{eq:tangent_gaussian_path}. The related vector field on $\mc{W}_{c}$, which generates the path \eqref{eq:nu_cond_lifted_gauss}, is given by \eqref{eq:condvectorfield-model}. 
%\begin{equation}\label{eq:nu_cond_replicator_field}
%u_t(W|\beta) = d\exp_{\eins_\mc{W}}(V)[\lambda V_\beta] = R_W[\lambda V_\beta]
%\end{equation}
%for $W = \exp_{\eins_\mc{W}}(V)$.
Comparing the shape of this field to \eqref{eq:AF-general} makes clear that assignment flows are natural candidate dynamics for matching conditional paths of the described form. The Riemannian conditional flow matching objective \eqref{eq:L-RCFM} consequently reads
\begin{equation}\label{eq:af_flow_matching}
\mc{L}_{\mrm{RCFM}}(\theta) = \EE_{t\sim\rho, \beta\sim p, W\sim \nu_t(\beta)}\,\Big[\big\|R_W[\lambda V_\beta - F_\theta(W, t)]\big\|_W^2\Big].
\end{equation}

We point out that this criterion is `simulation free', i.e.~\textit{no integration} of the assignment flow is required for loss evaluation, which makes training computationally efficient.

Our approach \eqref{eq:af_flow_matching} constitutes a novel instance of the flow-matching approach to generative modeling, introduced by \cite{Lipman:2023aa} and recently extended to Riemannian manifolds by \cite{Chen:2023}. 
This instance uses the assignment manifold \eqref{eq:def-mcW} and the corresponding Riemannian flow \eqref{eq:AF-general}, along with the meta-simplex embedding \eqref{eq:def-T-embedding}, to devise a generative model whose underlying information geometry tailors the model to the representation and learning of \textit{discrete joint} probability distributions.

\subsubsection{Infinite Integration Time}\label{sec:comparison_to_lipman}
A notable difference between our approach and previous Riemannian flow matching methods is that the target distribution is reached for $t\to\infty$ rather than after finite time.
This corresponds to the fact that $e$-geodesics do not reach boundary points of $\ol{\mc{W}_{c}}$ after finite time and thus avoids two problems faced in prior work. 

First, unlike the preliminary version presented in \cite{Boll:2024ab}, data points $\beta\in [c]^n$ do not need to be smoothed in order to present targets in the interior of $\mc{W}$. Instead, we can directly approach extreme points $\ol{W}_\beta\in \ol{\mc{W}_{c}}$, even though they are at infinity in the tangent space $\mc{T}_{0}$ at $\eins_\mc{W}$. Figure \ref{fig:T0-norms} shows that working within a ball in $T_{0}$ with radius $15$ suffices to represent `infinity' in practice. 
\begin{figure}
\centerline{
\includegraphics[width=0.5\textwidth]{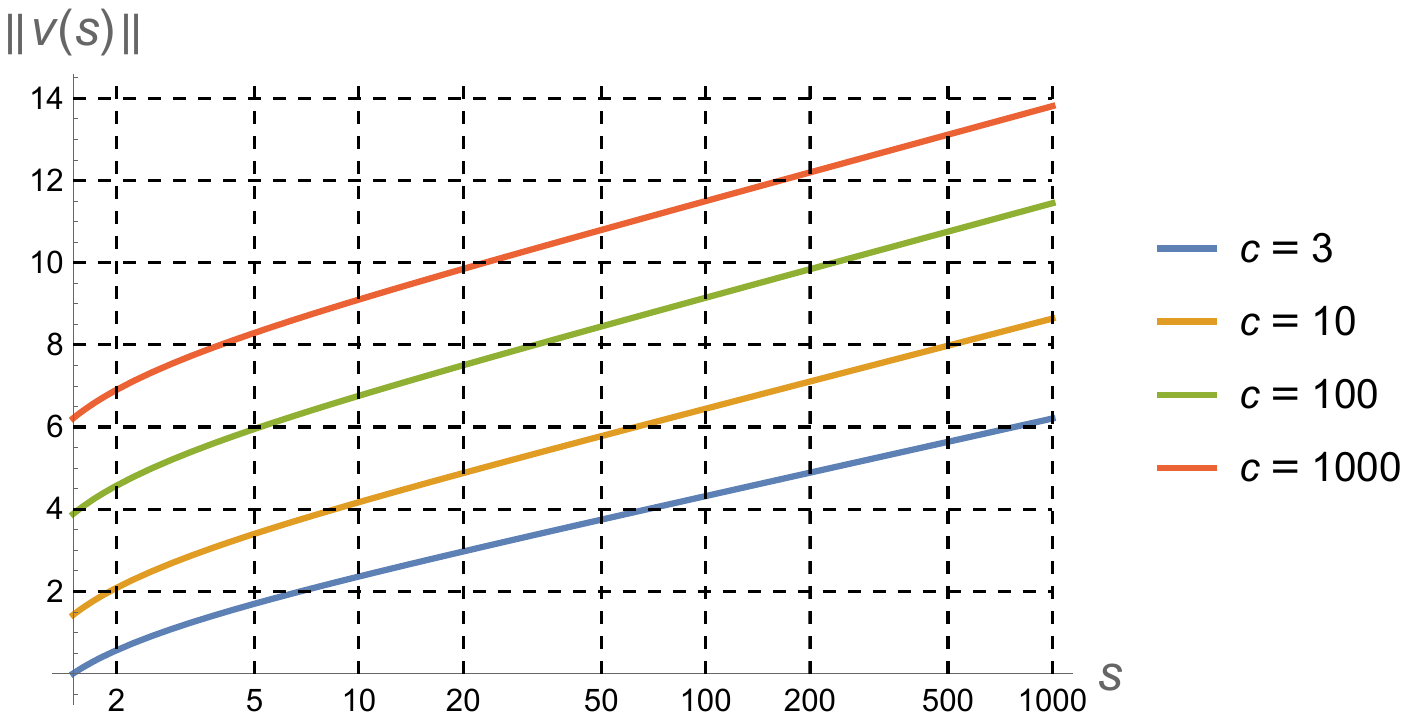}
}
\caption{Norms $\|v(s)\|$ of the tangent vectors $v(s) = \exp_{\eins_{\mc{S}}}^{-1}\big(p(s)\big)$ with $p(s)=(\frac{s-1}{s},\frac{1}{(c-1) s},\dotsc,\frac{1}{(c-1) s}\big) \to e_{1}\in\R^{c}$ if $s\to\infty$, for numbers of labels $c\in\{3,10,100,1000\}$. Since $\|e_{1}-p(s)\|=\big(\frac{c}{c-1}\big)^{1/2}\frac{1}{s}\approx \frac{1}{s}$, the simplex $\Delta_{c}$ is covered, up to a very small distance to its boundary, by $\exp_{\eins_{\mc{S}}}(B_{0}(r)) \subset \mc{S}_{c}$ and tangent vectors $v\in B_{0}(r)\subset T_{0}$ within a ball $B_{0}(r)$ centered at $0\in T_{0}$ with radius $r = 15$.
}
\label{fig:T0-norms}
\end{figure}

Second, by not moving all mass of the reference distribution (close) to $\ol{W}_\beta$ in finite time, we avoid a pathological behavior which can arise in flow matching on the simplex. 
Denote by
\begin{equation}\label{eq:rounding_region}
r_\beta = \Big\{W\in\mc{W}_{c}\colon \beta_i\in \arg\max_{j\in [c]}\,W_{i,j},\;\forall i\in [n]\Big\}
\end{equation}
the subset of points in $\mc{W}$ which assign their largest probability to the labels $\beta$.
\cite[Proposition~1]{Stark:2024aa} lays out that moving all mass of the reference distribution (close) to $\ol{W}_\beta$ in \textit{finite} time forces the model to make class decisions very early because the probability of $r_\beta$ under $\nu_t(\beta)$ increases too quickly. 
The effect is exacerbated by increasing the number of classes $c$ that the model is asked to discriminate.

However, by opting for large integration time $t\to\infty$ and a corresponding construction \eqref{eq:nu_cond_lifted_gauss} of conditional probability paths, our approach is able to scale to many classes $c\gg 1$, avoiding the pathology described in \cite[Proposition~1]{Stark:2024aa}.
Formally, this is because $\nu_t(\beta)$ defined in \eqref{eq:nu_cond_lifted_gauss} has full support on $\mc{W}_{c}$ for every $t\geq 0$. In practice, the parameter $\lambda$ in \eqref{eq:tangent_gaussian_path} can be used to control the speed at which the probability of $r_\beta$ under $\nu_t(\beta)$ increases, allowing the model to make class decisions gradually over time.

Figure \ref{fig:density_contour_timescales} (page \pageref{fig:density_contour_timescales}) displays probability density paths for illustration. 
The corresponding impact on model accuracy is quantitatively shown in Figure~\ref{fig:stark_simplex_scaling} (page \pageref{fig:stark_simplex_scaling}), with experimental details elaborated in Section~\ref{sec:class_scaling}.

\subsubsection{Relation to Dirichlet Flow Matching}\label{sec:comparison_to_stark}
The construction of \cite{Stark:2024aa} specifically addresses pathological behavior of flow matching on the simplex, by choosing conditional probability paths $\nu_t(\beta)$ as paths of Dirichlet distributions. They demonstrate that this approach scales to at least $c=160$ classes, by allowing the model to make class decisions gradually over time.
However, the explicit definition of $\nu_t(\beta)$ as paths of Dirichlet distributions makes it non-trivial to find corresponding vector fields $u_t(\cdot|\beta)$ for flow matching, which leads them to make an ansatz for fields which move mass along straight lines in the ambient Euclidean space in which the probability simplex is embedded. 

While we also make an explicit choice for $\nu_t(\beta)$ in \eqref{eq:nu_cond_lifted_gauss}, our construction is notably simpler than the approach of \cite{Stark:2024aa}, allowing to easily compute the vector fields $u_t(\cdot|\beta)$ by pushforward (Proposition \ref{prop:interpolants}).
The resulting flow moves mass along $e$-geodesics on $\mc{W}$, which is much more natural with respect to the information geometry of discrete probability distributions.
To illustrate this point, consider a straight path $\wh{p}(t)\in \R^n$ with direction $\frac{d}{dt} \wh{p}(t) = v\in \R^n$ at all times $t$. The trajectory $\wh{p}(t)$ is generated by maximizing $\la v, \wh{p}\ra$ along its gradient direction. On $\mc{W}_{c}$, the quantity $\la V_\beta, W\ra$ can be interpreted as correlation between $W\in \mc{W}_{c}$ and the direction $V_\beta$. The Riemannian gradient of this correlation with respect to the product Fisher-Rao geometry on $\mc{W}_{c}$ is $R_{W}[V_\beta]$, i.e.~precisely the direction of the conditional vector field \eqref{eq:condvectorfield-model}.

% + arbitrary reference distribution
% + interpolant perspective?

\subsection{Learning Interaction between Simplices}\label{sec:structured_flow_matching}
Our prior work \cite{Boll:2024aa} has studied the relationship between assignment flows on the product manifold $\mc{W}_{c}$ and replicator dynamics on the meta-simplex $\mc{S}_N$. 
We now use core results of \cite{Boll:2024aa} to derive the flow matching approach of Section~\ref{sec:learning} from \textit{first principles of flow matching in $\mc{S}_N$}, that is in the combinatorially large space of \textit{all} discrete joint distributions.
This demonstrates, in particular, that the proposed approach is suitable for \emph{structured prediction} settings, in which multiple \emph{coupled} random variables are of interest.

The result is surprising because \textit{direct} flow matching of joint distributions in $\mc{S}_N$ is \textit{intractable} due to the combinatorial dimension $N = c^n$. However, by leveraging the submanifold $\mc{T}$ (defined by \eqref{eq:TWc} and illustrated by Figure \ref{fig:Wright-AF}) and the compatibility of assignment flows with its geometry, we show that our construction can effectively break down combinatorial complexity and define a \textit{numerically tractable method}.

\vspace{0.2cm}
The map $T\colon \mc{W}_{c}\to \mc{S}_N$ defined in \eqref{eq:def-T-embedding} associates a marginal distribution of $n$ discrete random variables $W\in \mc{W}_{c}$ with a factorizing joint distribution $T(W)\in \mc{S}_N$.
Define with slight abuse of notation\footnote{$\pi_{0}$ is defined by \eqref{eq:def-pi0} as orthogonal projection onto the tangent space $T_{0}\mc{S}_{c}$ of the \textit{single} simplex $\mc{S}_{c}$ with trivial tangent bundle $\mc{S}_{c}\times T_{0}$. Here, to simplify notation, we overload $\pi_{0}$ to denote analogously the orthogonal projection onto the tangent space  $\mc{T}_{0}\mc{S}_{N}$.} the orthogonal projection 
\begin{equation}\label{eq:def-pi0-SN}
\pi_0\colon \R^N\to \mc{T}_0\mc{S}_N
\end{equation}
and formally denote the scaled standard normal distribution on $\mc{T}_0\mc{S}_N$ with variance $c^{n-1}$ by
\begin{equation}\label{eq:standard_normal_Sn_tangent}
    \mc{N}_0^{\mc{S}_N} = (\sqrt{c^{n-1}}\pi_0)_\sharp\mc{N}(0, I_{N}) = \mc{N}(0, c^{n-1}\pi_0\pi_0^\top) 
    = \mc{N}(0, c^{n-1}\pi_0).
\end{equation}
Analogous to the construction of conditional measures in Section~\ref{sec:conditional_fields}, we define the path of conditional measures
\begin{equation}\label{eq:Nt_cond_Sn_tangent}
    \mc{N}_t^{\mc{S}_N}(\cdot|\beta) = \mc{N}(\cdot; t c^{n-1}\lambda \pi_0e_\beta, c^{n-1}\pi_0)
\end{equation}
given a labeling $\beta\in [c]^n$ and a rate parameter $\lambda > 0$, scaled by the constant $c^{n-1}$.
It follows from Proposition~\ref{prop:cond_path_constraints} that 
\begin{equation}\label{eq:expNt_cond_Sn}
    \nu_t^{\mc{S}_N}(\beta) = (\exp_{\eins_{\mc{S}_N}})_\sharp \mc{N}_t^{\mc{S}_N}(\cdot|\beta)
\end{equation}
satisfies the conditions \eqref{eq:nu_conditional_endpoints} on $\mc{S}_N$ and is thus suitable for flow matching on $\mc{S}_N$ with reference distribution $\nu_{0}^{\mc{S}_N} = \mc{N}_0^{\mc{S}_N}$.
Formally, the Riemannian conditional flow matching criterion analogous to \eqref{eq:af_flow_matching} reads
\begin{equation}\label{eq:cond_fm_Sn}
    \mc{L}_{\mrm{RCFM}}^{\mc{S}_N}(\theta) = \EE_{t\sim\rho, \beta\sim p, q\sim \nu_t^{\mc{S}_N}(\beta)}\,\Big[\big\|R_q[\lambda \pi_0 e_\beta - f_\theta(q, t)]\big\|_w^2\Big]
\end{equation}
for an affinity function $f_\theta\colon \mc{S}_N\times [0,\infty)\to \mc{T}_0\mc{S}_N$. 

The task of minimizing \eqref{eq:cond_fm_Sn} is numerically intractable, because we are not even able to easily represent  general \emph{points} $q\in \mc{S}_N\setminus \mc{T}$ in the complement of the embedded assignment manifold $\mc{T}=T(\mc{W}_{c})$ given by \eqref{eq:def-T-embedding}.
To break down this complexity, we will define a projection onto $\mc{T}$ by using the \emph{lifting map lemma} \cite[Lemma~3.3]{Boll:2024aa}, which states
\begin{equation}\label{eq:lifting_map_lemma}
    \exp_{\eins_{\mc{S}_N}}(QV) = T\big(\exp_{\eins_{\mc{W}}}(V)\big)
\end{equation}
for all tangent vectors $V\in T_0\mc{W}$, with the mappings $T$ and $Q$ defined by \eqref{eq:def-T-embedding} and \eqref{eq:def-Q-embedding}. 
We start with an orthogonal projection $\mc{T}_0\mc{S}_N\to \mimg(Q)\cap \mc{T}_0\mc{S}_N$.

\begin{lemma}[\textbf{orthogonal projection onto $\mimg(Q)\cap \mc{T}_0\mc{S}_N$}]\label{lem:orth_projection}
The orthogonal projection $\proj_0$ of tangent vectors in $\mc{T}_0\mc{S}_N$ to the subspace $\mimg Q\cap T_0\mc{S}_N$ reads
\begin{equation}\label{eq:def_proj0}
\proj_0\colon \mc{T}_0\mc{S}_N\to \mimg Q\cap \mc{T}_0\mc{S}_N,
\qquad\qquad
    \proj_0(v) := {Q_c}\Pi_0 {Q_c}^\top, v\qquad \text{for }\;v\in \mc{T}_0\mc{S}_N,
\end{equation}
in terms of the linear operator 
\begin{equation}\label{eq:def-Qc}
{Q_c} := \frac{1}{\sqrt{c^{n-1}}}Q.
\end{equation}
\end{lemma}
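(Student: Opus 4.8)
The plan is to show that the operator $P := Q_c\Pi_0 Q_c^\T$ appearing on the right-hand side of \eqref{eq:def_proj0} is self-adjoint and idempotent with range exactly $\mimg Q\cap T_0\mc{S}_N$; since a self-adjoint idempotent operator is precisely the orthogonal projection onto its range, this yields the lemma (the asserted statement being the restriction of $P$ to $T_0\mc{S}_N$). Self-adjointness is immediate: $\Pi_0$ is an orthogonal projection by \eqref{eq:def-Pi0}, hence $P^\T = Q_c\Pi_0^\T Q_c^\T = P$.

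The computational core is to understand the composition $Q^\T Q$. First I would identify the adjoint $Q^\T\colon\R^N\to\R^{n\times c}$ with the marginalization map $M$ of \eqref{eq:def-M-map}: for $V\in\R^{n\times c}$ and $v\in\R^N$ one has $\la QV,v\ra = \sum_{\alpha\in[c]^n}\sum_{i\in[n]}V_{i,\alpha_i}v_\alpha = \sum_{i\in[n]}\sum_{j\in[c]}V_{i,j}\sum_{\alpha:\alpha_i=j}v_\alpha = \la V,Mv\ra$. Splitting the double sum in $(Q^\T QV)_{i,j}=(M(QV))_{i,j}=\sum_{\alpha:\alpha_i=j}\sum_{k\in[n]}V_{k,\alpha_k}$ according to whether $k=i$ or $k\ne i$, and counting multi-indices with one or two prescribed coordinates, gives $(Q^\T QV)_{i,j}=c^{n-1}V_{i,j}+c^{n-2}\sum_{k\ne i}\la\eins_c,V_k\ra$ for every $V$. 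On $\mc{T}_0$ the second term vanishes, so $Q^\T Q\,\Pi_0 = c^{n-1}\Pi_0$, i.e.\ $Q_c^\T Q_c\,\Pi_0 = \Pi_0$ with $Q_c$ as in \eqref{eq:def-Qc}. Idempotency then follows from $P^2 = Q_c\big(\Pi_0 Q_c^\T Q_c\,\Pi_0\big)Q_c^\T = Q_c\Pi_0 Q_c^\T = P$, using $Q_c^\T Q_c\,\Pi_0 = \Pi_0$ and $\Pi_0^2=\Pi_0$.

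It remains to identify $\rge P$ with $\mimg Q\cap T_0\mc{S}_N$. Since $\rge\Pi_0 = \mc{T}_0$, the form of $P$ gives $\rge P\subseteq Q_c(\mc{T}_0)=Q(\mc{T}_0)$; conversely, for $w=Q_cV$ with $V\in\mc{T}_0$ one computes $Pw = Q_c\Pi_0 Q_c^\T Q_c V = Q_c\Pi_0 V = Q_cV = w$ (using $Q_c^\T Q_cV=V$ and $\Pi_0V=V$), so $Q(\mc{T}_0)\subseteq\rge P$ and hence $\rge P = Q(\mc{T}_0)$. Finally, decomposing $\R^{n\times c}=\mc{T}_0\oplus\mc{T}_0^\perp$ with $\mc{T}_0^\perp=\{V\in\R^{n\times c}: V_k\in\R\eins_c\ \forall k\in[n]\}$, I would check that $Q(\mc{T}_0^\perp)=\R\eins_N$ while $Q(\mc{T}_0)\subseteq T_0\mc{S}_N$ (sum the entries of $QV$ and invoke $\la\eins_c,V_k\ra=0$); since $\R\eins_N$ is the orthogonal complement of $T_0\mc{S}_N$ in $\R^N$, every element of $\mimg Q$ lying in $T_0\mc{S}_N$ must have vanishing $\eins_N$-component, whence $\mimg Q\cap T_0\mc{S}_N = Q(\mc{T}_0)=\rge P$.

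The only genuine obstacle is the combinatorial bookkeeping behind the formula for $Q^\T Q$: one must keep track of the fact that fixing a single coordinate $\alpha_i=j$ leaves $c^{n-1}$ multi-indices --- producing the diagonal term $c^{n-1}V_{i,j}$ --- whereas fixing two distinct coordinates leaves $c^{n-2}$, producing the terms $c^{n-2}\la\eins_c,V_k\ra$ which vanish on $\mc{T}_0$. Everything else is routine linear algebra of orthogonal projectors.
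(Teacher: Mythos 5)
Your proof is correct, and it reaches the lemma by a somewhat different route than the paper. The paper verifies the projection property by the residual criterion: it shows $\mimg\proj_0\subseteq\mimg Q\cap T_0\mc{S}_N$ and $\la v-\proj_0 v,\,y\ra=0$ for all $y\in\mimg Q\cap T_0\mc{S}_N$, leaning on two imported facts from prior work — $Q^\T Q=c^{n-1}\,\mathrm{id}$ on $\mc{T}_0$ and the intertwining relation $Q_c\Pi_0=\pi_0 Q_c$ (whence $Q_c^\T\pi_0=\Pi_0 Q_c^\T$). You instead characterize the orthogonal projection as the self-adjoint idempotent with the right range, and you make the argument self-contained: you derive $Q^\T=M$ and the combinatorial formula $(Q^\T QV)_{i,j}=c^{n-1}V_{i,j}+c^{n-2}\sum_{k\neq i}\la\eins_c,V_k\ra$ directly (the paper cites this as a lemma), and in place of the commutation relation you identify the range via the decomposition $\R^{n\times c}=\mc{T}_0\oplus\mc{T}_0^\perp$ together with $Q(\mc{T}_0)\subseteq T_0\mc{S}_N$ and $Q(\mc{T}_0^\perp)=\R\eins_N$ — which is essentially the same geometric content as $Q_c\Pi_0=\pi_0 Q_c$, proved from scratch. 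What the paper's route buys is brevity by reuse of established lemmas; what yours buys is a self-contained verification and the explicit identification $\rge\proj_0=Q(\mc{T}_0)=\mimg Q\cap T_0\mc{S}_N$, which the paper leaves slightly implicit. All your computations check out (in particular $P^2=P$ via $Q_c^\T Q_c\Pi_0=\Pi_0$, and $Pw=w$ for $w=Q_cV$, $V\in\mc{T}_0$), and your remark that the restriction of the ambient projection to $T_0\mc{S}_N$ is the asserted map is the right way to square your operator-level statement with the lemma's formulation.
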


Since \eqref{eq:lifting_map_lemma} ensures that $\exp_{\eins_{\mc{S}_N}}(\mimg Q)\subseteq \mc{T}$, we can now define the projection
\begin{equation}\label{eq:proj_pt_cond_to_T}
    \proj_{\mc{T}} := \exp_{\eins_{\mc{S}_N}}\circ \proj_0\circ \exp_{\eins_{\mc{S}_N}}^{-1}\colon \mc{S}_N\to \mc{T}.
\end{equation}
Under this projection, the conditional measures $\nu_t^{\mc{S}_N}(\beta)\in \mc{P}(\mc{S}_N)$ precisely induce the conditional probability paths $\nu_t(\beta)\in \mc{P}(\mc{W}_{c})$ defined by \eqref{eq:nu_cond_lifted_gauss}. 
Note that every extreme point of $\mc{S}_N$ lies in (the closure of) $\mc{T}$. Thus, projecting to $\mc{T}$ preserves the Dirac measures $\delta_{e_\beta}$ reached by the conditional distributions \eqref{eq:expNt_cond_Sn} in the limit $t\to \infty$.
In particular, the projection transforms the intractable conditional flow matching criterion \eqref{eq:cond_fm_Sn} on $\mc{S}_N$ into the numerically tractable criterion \eqref{eq:af_flow_matching}.
\begin{theorem}[\textbf{projected flow matching on $\mc{S}_N$}]\label{theorem:proj_Sn_fm}
For any $\beta\in [c]^n$, the pushforward of the conditional measure $\nu_t^{\mc{S}_N}(\beta)$ defined in \eqref{eq:expNt_cond_Sn} under the projection $\proj_{\mc{T}}\colon \mc{S}_N\to \mc{T}$ defined in \eqref{eq:proj_pt_cond_to_T} is
\begin{equation}\label{eq:proj_cond_measure}
(\proj_{\mc{T}})_\sharp \nu_t^{\mc{S}_N}(\beta)
    = T_\sharp \nu_t(\beta)
\end{equation}
with $\nu_t(\beta)\in \mc{P}(\mc{W}_{c})$ defined in \eqref{eq:nu_cond_lifted_gauss} and the embedding map $T$ given by \eqref{eq:def-T-embedding}.
Furthermore, the flow matching criterion on $\mc{T}$, induced by the conditional paths \eqref{eq:proj_cond_measure}, reads
\begin{equation}\label{eq:proj_flow_matching}
\mc{L}_{\mrm{RCFM}}^{\mc{T}}(\theta) = \EE_{t\sim\rho, \beta\sim p, q\sim (\proj_{\mc{T}})_\sharp \nu_t^{\mc{S}_N}(\beta)}\,\Big[\big\|R_q[\lambda \pi_0 e_\beta - \wt f_\theta(q, t)]\big\|_w^2\Big]
\end{equation}
and, using the ansatz $\wt f_\theta = Q \circ F_\theta \circ M$ with $Q$ and $M$ defined by \eqref{eq:def-Q-embedding} and \eqref{eq:def-M-map} , \eqref{eq:proj_flow_matching} is equal to the criterion \eqref{eq:af_flow_matching} for  matching assignment flows on $\mc{W}_{c}$.
\end{theorem}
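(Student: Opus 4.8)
The plan is to prove the two assertions in turn: first the pushforward identity \eqref{eq:proj_cond_measure}, and then the reduction of the criterion \eqref{eq:proj_flow_matching} to \eqref{eq:af_flow_matching}.

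For \eqref{eq:proj_cond_measure} I would first strip off the outer exponential maps. Since $\proj_{\mc{T}} = \exp_{\eins_{\mc{S}_N}}\circ\proj_0\circ\exp_{\eins_{\mc{S}_N}}^{-1}$ by \eqref{eq:proj_pt_cond_to_T} and $p_t^{\mc{S}_N}(\cdot|\beta) = (\exp_{\eins_{\mc{S}_N}})_\sharp\mc{N}_t^{\mc{S}_N}(\cdot|\beta)$ by \eqref{eq:expNt_cond_Sn}, the relation $(\phi\circ\psi)_\sharp\mu = \phi_\sharp(\psi_\sharp\mu)$ gives $(\proj_{\mc{T}})_\sharp p_t^{\mc{S}_N}(\cdot|\beta) = (\exp_{\eins_{\mc{S}_N}})_\sharp(\proj_0)_\sharp\mc{N}_t^{\mc{S}_N}(\cdot|\beta)$, while on the other side the lifting map lemma \eqref{eq:lifting_map_lemma} gives $T_\sharp p_t(\cdot|\beta) = (T\circ\exp_{\eins_{\mc W}})_\sharp\mc{N}_{t,\beta} = (\exp_{\eins_{\mc{S}_N}}\circ Q)_\sharp\mc{N}_{t,\beta} = (\exp_{\eins_{\mc{S}_N}})_\sharp Q_\sharp\mc{N}_{t,\beta}$. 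So it suffices to show $(\proj_0)_\sharp\mc{N}_t^{\mc{S}_N}(\cdot|\beta) = Q_\sharp\mc{N}_{t,\beta}$, and since $\proj_0$ and $Q$ are linear both are Gaussian, so it remains to match means and covariances. For this I would use three elementary inputs: a direct computation from \eqref{eq:def-Q-embedding} and \eqref{eq:def-M-map} showing $Q^\top = M$, hence $Q_c^\top e_\beta = (c^{n-1})^{-1/2}\ol{W}_\beta$ and $\Pi_0 Q_c^\top e_\beta = (c^{n-1})^{-1/2}V_\beta$ with $V_\beta$ as in \eqref{eq:tangent_gaussian_path}; the commutation relations \eqref{eq:Q_normalized} and \eqref{eq:Qt_Pi0_commute} from the proof of Lemma~\ref{lem:orth_projection}, together with $\Pi_0^2=\Pi_0$; and the consequence $Q\Pi_0 Q^\top = c^{n-1}\proj_0$ of \eqref{eq:def_proj0}, \eqref{eq:def-Qc}. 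The mean of $(\proj_0)_\sharp\mc{N}_t^{\mc{S}_N}(\cdot|\beta)$ is then $\proj_0\big(tc^{n-1}\lambda\pi_0 e_\beta\big) = tc^{n-1}\lambda\,Q_c\Pi_0 Q_c^\top e_\beta = t\lambda QV_\beta$, its covariance is $c^{n-1}\proj_0\pi_0\proj_0^\top = c^{n-1}\proj_0$, and on the other side $Q_\sharp\mc{N}(\cdot;t\lambda V_\beta,\Pi_0) = \mc{N}(\cdot;t\lambda QV_\beta, Q\Pi_0 Q^\top) = \mc{N}(\cdot;t\lambda QV_\beta, c^{n-1}\proj_0)$; the Gaussians coincide, which gives \eqref{eq:proj_cond_measure}.

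For the second assertion I would argue as follows. By \eqref{eq:proj_cond_measure}, drawing $q\sim(\proj_{\mc{T}})_\sharp p_t^{\mc{S}_N}(\cdot|\beta)$ is the same as drawing $W\sim p_t(\cdot|\beta)$ and setting $q=T(W)$, so the expectation in \eqref{eq:proj_flow_matching} turns into one over $t\sim\rho,\ \beta\sim p,\ W\sim p_t(\cdot|\beta)$ of $\big\|R_{T(W)}[\lambda\pi_0 e_\beta - \wt f_\theta(T(W),t)]\big\|_{T(W)}^2$. A direct computation from \eqref{eq:def-T-embedding} and \eqref{eq:def-M-map} shows $M\circ T = \mathrm{id}_{\mc W}$, so the ansatz $\wt f_\theta = Q\circ F_\theta\circ M$ gives $\wt f_\theta(T(W),t) = Q F_\theta(W,t)$. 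The structural input, a core result of \cite{Boll:2024aa}, is the intertwining identity $dT_W\circ R_W = R_{T(W)}\circ Q$ together with the fact that $T$ is a Riemannian isometric embedding of $(\mc W,g)$ into $(\mc{S}_N,g)$; combining them yields $\big\|R_{T(W)}[Qh]\big\|_{T(W)}^2 = \big\|dT_W[R_W h]\big\|_{T(W)}^2 = \|R_W h\|_W^2$ for every $h\in\R^{n\times c}$. The conditional direction on $\mc{T}$ generating the projected path \eqref{eq:proj_cond_measure} is the $T$-pushforward of the conditional field \eqref{eq:condvectorfield-model} on $\mc W$, namely $R_{T(W)}[\lambda QV_\beta]$ with $QV_\beta = Q\Pi_0 Q^\top e_\beta$, which is precisely the component of the raw direction $\lambda\pi_0 e_\beta$ that survives $\proj_0$ — exactly the quantity exhibited by the mean computation of Part~1. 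Hence the integrand equals $\big\|R_{T(W)}\big[Q(\lambda V_\beta - F_\theta(W,t))\big]\big\|_{T(W)}^2 = \big\|R_W[\lambda V_\beta - F_\theta(W,t)]\big\|_W^2$, and taking expectations recovers \eqref{eq:af_flow_matching} verbatim.

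I expect the main obstacle to be the second part, specifically the conditional vector field bookkeeping. Passing from $\mc{S}_N$ to $\mc{T}$ is not a mere restriction of tangent vectors: $R_{T(W)}$ applied to the full direction $\pi_0 e_\beta$ and to its $\mimg Q$-projection $QV_\beta$ need not agree, so one must verify that the criterion ``induced by the conditional paths'' genuinely sees only the projected direction $QV_\beta$ — which is what makes the replicator field $R_W$ on $\mc W$ reappear exactly, rather than up to lower-order corrections. Once the two properties of $T$ (the intertwining identity and the isometry, both available from \cite{Boll:2024aa}) are in hand, together with $M\circ T=\mathrm{id}_{\mc W}$ and linearity of $R_W$ and $Q$, the collapse to \eqref{eq:af_flow_matching} is forced; by contrast, Part~1 is routine Gaussian algebra once $Q^\top=M$ and the commutation relations \eqref{eq:Q_normalized}, \eqref{eq:Qt_Pi0_commute} of Lemma~\ref{lem:orth_projection} are available.
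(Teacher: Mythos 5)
Your proof is correct and follows essentially the same route as the paper's: the pushforward identity via the Gaussian mean/covariance computation using $Q^\top=M$, the relations of Lemma~\ref{lem:orth_projection} and the lifting map lemma \eqref{eq:lifting_map_lemma}, and the criterion equality via the projected conditional field $R_q[\lambda QV_\beta]$, the ansatz with $M\circ T=\mathrm{id}_{\mc{W}}$, and the isometry and intertwining results of \cite{Boll:2024aa}. The subtlety you flag — that the criterion induced by the projected paths sees the generating direction $QV_\beta$ rather than the raw $\pi_0 e_\beta$ appearing in \eqref{eq:proj_flow_matching} — is resolved exactly as in the paper's own proof, which likewise substitutes $\lambda QV_\beta$ when passing to \eqref{eq:af_flow_matching_metasimplex} (your parenthetical that $QV_\beta$ is ``the component of $\lambda\pi_0 e_\beta$ surviving $\proj_0$'' is off by the factor $c^{n-1}$, which is precisely why the conditional measures \eqref{eq:Nt_cond_Sn_tangent} carry that scaling, but this does not affect your argument).
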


Theorem~\ref{theorem:proj_Sn_fm} shows that the constructed flow matching on $\mc{W}_{c}$, which operates separately on multiple simplices, is \textit{induced} by flow matching in the \emph{single} meta-simplex $\mc{S}_N$, with conditional distribution paths and vector fields projected to the submanifold $\mc{T} = T(\mc{W}_{c})$.

This result provides a geometric justification of the fact that \textit{interaction} between simplices is learned through flow matching, even though all conditional probability paths $\nu_t(\beta)$ used for training can be \textit{separately} constructed on individual simplices.

\subsection{Numerical Flow Integration}
\label{sec:numerics}
We point out again that learning our generative model by Riemannian flow matching is 'simulation free': numerical integration is not required since only vector fields have to be matched which are defined on the tangent bundle of the assignment manifold and on the corresponding tangent-subspace distribution of the meta simplex (Prop.~\ref{theorem:proj_Sn_fm}), respectively. On the other hand, numerical integration of the flow is required for evaluating the learned generative model, in order to sample as illustrated by Figure \ref{fig:Wright-AF}, or for likelihood computation (Section \ref{sec:Likelihood}).

Since the flow corresponds to an ODE on a Riemannian manifold, \textit{geometric} numerical integration utilizes various representations of the ODE on the tangent bundle in order to apply established methods for numerical integration on Euclidean spaces \cite{Hairer:2006aa}. In the case of the assignment flow, this has been thoroughly studied by \cite{Zeilmann:2020aa} using the extension of the lifting map \eqref{eq:def-lifting-p} to the product manifold \eqref{eq:def-lifting-W}, regarded as action of the respective tangent space (regarded as additive abelian Lie group) on the assignment manifold. From the general viewpoint of geometric numerical integration, the resulting schemes for geometric numerical integration categorize as Runge-Kutta schemes of Munthe-Kaas type \cite{Munthe-Kaas:1999aa}.

Specifically, in this paper, numerical integration was carried out using the classical explicit embedded Dormand \& Prince Runge-Kutta method \cite{Dormand:1980aa} of orders 4 \& 5 with stepsize control (cf.~\cite[Section 5.2]{Zeilmann:2020aa} and \cite[Section II.5]{Hairer:2008aa}).

\subsection{Likelihood Computation}
\label{sec:Likelihood}
The likelihood of test data under the model distribution $\wt p$ is commonly used as a surrogate for Kullback-Leibler divergence between $\wt p$ and the true data distribution $p$,  due to the relationship
\begin{equation}\label{eq:kl_log_lh}
    \KL(p, \wt p) = \EE_p\Big[\log \frac{p}{\wt p}\Big] = -H(p) - \EE_p [\log \wt p].
\end{equation}
%In the last step, we assume that $\wt p$ has a density with respect to the Lebesgue measure and have used again the symbol $\wt p$ to denote this density.
The entropy $H(p)$ is a property of the data distribution, which is not typically known, but can be treated as a constant which does not depend on the model used to approximate $\wt{p}$. 
% Thus, replacing expectation by a mean over independent test data $\{x_k\}_{k\in [m]}$, we have
% \begin{equation}\label{eq:kl_log_lh_test}
%     \KL(p, \wt p) \approx \text{const} - \frac{1}{m} \sum_{k\in [m]} \wt p(x_k).
% \end{equation}
For continuous normalizing flows, likelihood under the model is directly used as a training criterion, for this reason. Using the instantaneous change-of-variables formula \cite{Chen:2018}
\begin{equation}\label{eq:instant_change_of_variables}
%    \frac{\log \wt p_t(x)}{\partial t} = - \mtrace\Big( \frac{\partial f}{\partial x}\Big),
\frac{\partial}{\partial t} \log \nu_t(x) = - \mtrace J(x, t),
\end{equation}
log-likelihood under continuous normalizing flows can, on continuous state spaces, be computed by integrating \eqref{eq:instant_change_of_variables} backward in time. In \eqref{eq:instant_change_of_variables}, $J(x,t)$ denotes the vector field Jacobian, whose trace is commonly approximated by using Hutchinson's estimator \cite{Hutchinson:1989}
\begin{equation}\label{eq:Hutchinson}
    \mtrace J = \EE_{v}[\la v, Jv\ra]
\end{equation}
with $v$ drawn from a fixed normal or Rademacher distribution. The use of this estimator in the context of likelihood under continuous normalizing flows was proposed by \cite{Grathwohl:2018}. 
The authors use a single sample $v$ for each integration of \eqref{eq:instant_change_of_variables}, which yields an unbiased estimator for log-likelihood of independent test data. 
In order to use likelihood as a training criterion, numerical integration of \eqref{eq:instant_change_of_variables} is required. This entails many forward and backward passes through the employed network architecture in order to compute a single parameter update.

Therefore, we do not use likelihood as a training criterion, opting instead for the simulation-free flow matching approach of Section~\ref{sec:learning}.
Since the learned model is still a normalizing flow, \eqref{eq:instant_change_of_variables} remains a useful tool for computing likelihoods under our model. 
However, because we are modeling discrete data while working on continuous state spaces, likelihood of discrete data can not be computed as a point estimate on $\mc{W}_{c}$. Further details are provided in Appendix \ref{sec:appendix-likelihood}.

\subsection{Dequantization}\label{sec:dequantization}
Approximation of discrete data distributions by continuous distributions has been studied through the lens of \emph{dequantization}. Choose a latent space $\mc{F}^n$ and an embedding of class label configurations $\beta\in [c]^n$ as prototypical points $f_\beta^\ast\in \mc{F}^n$. Suppose the choice of these points is fixed before training and associate disjoint sets $A_\beta\subseteq \mc{F}^n$ with label configurations such that they form a partition of $\mc{F}^n$ and $f_\beta^\ast\in A_\beta$. 
We can then define the continuous surrogate model 
\begin{equation}\label{eq:continuous_surrogate_theis}
    \vartheta = \sum_{\beta\in [c]^n} p_\beta \mc{U}_{A_\beta} \in \mc{P}(\mc{F}^n)
\end{equation}
which represents $p\in \mc{S}_N$ as a mixture of uniform distributions $\mc{U}_{A_\beta}$, supported on the disjoint subsets $A_\beta$. The underlying idea is that
\begin{equation}\label{eq:continuous_surrogate_region_prob}
    \PP_\vartheta(A_\beta) = \int_{A_\beta} \vartheta(y)dy = p_\beta \int_{A_\beta} \mc{U}_{A_\beta}(y) dy = p_\beta
\end{equation}
due to the disjoint support of mixture components in \eqref{eq:continuous_surrogate_theis}.
Denote a continuous model distribution by $\nu\in \mc{P}(\mc{F}^n)$. Using Jensen's inequality, we find
\begin{subequations}\label{eq:cont_surrogate_jensen_kl}
\begin{align}
-H(\vartheta) - \KL(\vartheta, \nu) &= \int \vartheta(y)\log \nu(y) dy
    = \sum_{\beta\in [c]^n} p_\beta \int_{A_\beta} \log \nu(y) dy \\
    &\leq \sum_{\beta\in [c]^n} p_\beta \log \int_{A_\beta} \nu(y) dy\\
    &= -H(p) - \KL(p, q)
\end{align}
\end{subequations}
for the discrete model distribution $q$ defined by
\begin{equation}\label{eq:dequantization_discrete_model}
    q_\beta = \int_{A_\beta} \nu(y) dy = \PP_{\nu}(A_\beta).
\end{equation}
Thus, fitting $\nu$ to $\vartheta$ by maximizing log-likelihood of smoothed data drawn from $\vartheta$ implicitly minimizes an upper bound on the relative entropy $\KL(p,q)$. In practice, drawing smoothed data from $\proposaldist$ amounts to adding noise to the prototypes $f_{\beta_k}^\ast\in \mc{F}^n$ of discrete data $\{\beta_k\}_{k\in [m]}$.

The above \textit{dequantization approach} was first proposed by \cite{Theis:2015}. 
Their reasoning justifies the previously known heuristic of adding noise to dequantize data \cite{Uria:2013}. It has thenceforth become common practice for training normalizing flows as generative models of images \cite{Dinh:2017, Salimans:2017} and was generalized to non-uniform noise distributions by \cite{Ho:2019}.
These authors focus on image data which, although originally continuous, are only available discretized into $8$-bit integer color values for efficient digital storage. 
In this case, the underlying continuous color imparts a natural structure on the set of discrete classes. Similar colors are naturally represented as prototypes which are close to each other with respect to some metric on the feature space $\mc{F}^n$.

For the \textit{general} discrete data considered here, such a structure is not available. 
As a remedy, \cite{Chen:2022aa} present an approach to learn the embedding jointly with likelihood maximization and defining the partition of $\mc{F}^n$ into subsets $A_\beta$ through Voronoi tesselation.
The rounding model variant \eqref{eq:likelihood_rounding_model} of our approach can be seen as dequantization on the space $\mc{F}^n=\mc{W}_{c}$ with prototypical points $f_{\beta}^\ast = \ol{W}_\beta$. The sets $A_\beta$ generated by Voronoi tesselation then coincide with the sets $r_\beta$ defined by \eqref{eq:rounding_region}.
However, our approach differs from \cite{Chen:2022aa} by using flow matching instead of likelihood-based training and by explicit consideration of information geometry on $\mc{W}_{c}$.

A natural question is whether the ability to learn an embedding of class configurations as prototypical points $f_\beta^\ast$, thereby representing similarity relations between classes, can be replicated in our setting.
Indeed, because points in $\mc{S}_c$ have a clear interpretation as categorical distributions, it is easy to achieve this goal by extending the affinity function $F_\theta$ of the assignment flow \eqref{eq:AF-general}.

For some $L > 0$, let $E\in \R^{L\times c}$ be a learnable embedding matrix. The columns of $E$ can be seen as prototypical points in the Euclidean latent space $\R^L$. The action of $E$ on an integer probability vector $e_j\in \mc{S}_c$ precisely selects one of these points, associating it with the class $j\in [c]$.
Learning $E$ now allows to represent relationships between classes in the latent space $\R^L$.
Let $\mc{E}\colon \R^{n\times c}\to \R^{n\times c}$ denote the linear operator which applies $E$ nodewise.
We now choose a parameterized function $\wt F_\theta\colon \R^L\to\R^L$ that operates on $\R^L$ and define the extended payoff function
\begin{equation}\label{eq:embedding_payoff_func}
    F_\theta = \mc{E}^\top \circ \wt F_\theta \circ \mc{E}\colon \mc{W}_{c}\to \R^{n\times c}.
\end{equation}

\section{Experiments and Discussion}\label{sec:Experiments}
% !TEX root =  ../revision_AFGenerative.tex
%%%%%%%%%%%%%%%%%%%%%

As outlined in Section~\ref{sec:Approach}, we perform Riemannian flow-matching \eqref{eq:riemannian-flow-matching} via the conditional objective \eqref{eq:af_flow_matching} to learn assignment flows \eqref{eq:AF-general}. 
These in turn approximate $p_\infty$ in the limit $t\to\infty$ and thereby the unknown data distribution $p$ through \eqref{eq:marginal_limit_representation}.

\subsection{Class Scaling}\label{sec:class_scaling}
First, we replicate the experiment of \cite[Figure~4]{Stark:2024aa} to verify that our model is able to make decisions gradually over longer integration time and can scale to many classes $c$. Details of the training procedure are relegated to Appendix~\ref{sec:class_scaling_app}.
For each $c$, the data distribution is a randomly generated, factorizing distribution on $n=4$ simplices.

Figure~\ref{fig:stark_simplex_scaling} shows the relative entropy between the learned models (histogram of 512k samples) and the known target distribution. 
Our proposed approach is able to outperform our earlier method \cite{Boll:2024ab} (green) as well as Dirichlet flow matching \cite{Stark:2024aa} (orange) and the linear flow matching baseline (blue) in terms of scaling to many classes $c$.
In Figure~\ref{fig:stark_simplex_scaling}, the linear flow matching baseline scales better to many classes than in \cite[Figure~4]{Stark:2024aa}, but the qualitative statement that linear flow matching is ill-suited to this end is still supported by our empirical findings.
Our preliminary approach \cite{Boll:2024ab} (green) also scales comparatively well, even outperforming Dirichlet flow matching. 
Figures~\ref{fig:density_contour_timescales} and \ref{fig:density_rates-1} illustrate probability paths $\nu_t(\beta)$ for our approach (cf.~\eqref{eq:nu_cond_lifted_gauss}) and Dirichlet flow matching \cite{Stark:2024aa} at different time scales. 

A property of assignment flow approaches, possibly linked to observed performance, is to transport probability mass relative to the underlying Fisher Rao geometry (recall Section~\ref{sec:comparison_to_stark}). For example, this leads to little probability mass in regions close to the simplex boundaries (Figure~\ref{fig:density_contour_timescales}).

\begin{figure}[ht]
\centering
\includegraphics[width=0.5\textwidth]{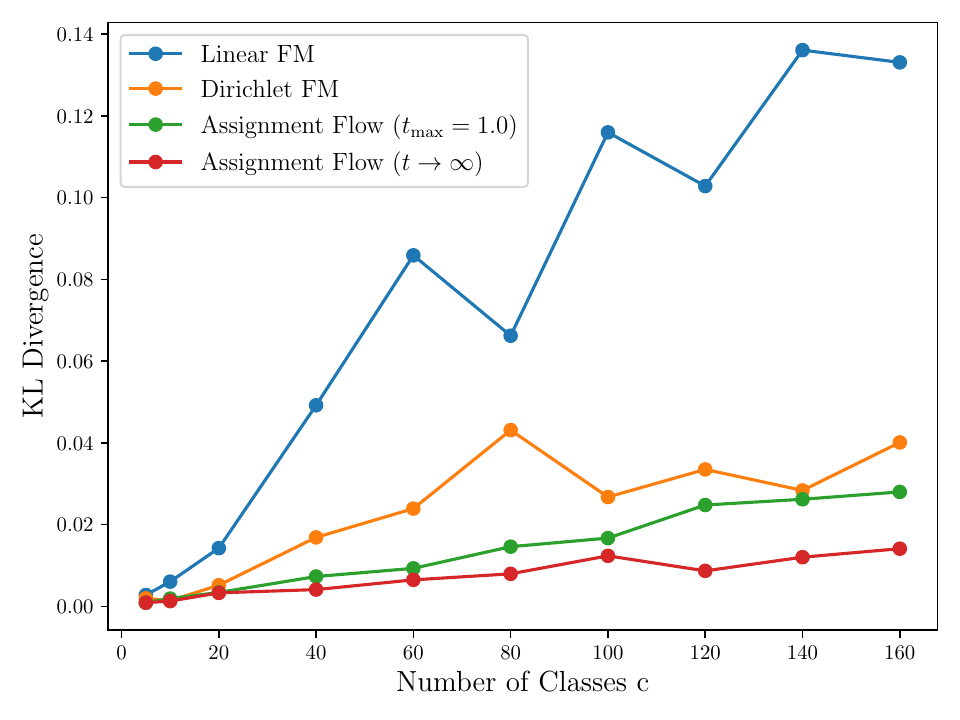}
\caption{Relative entropy between learned models (histogram of 512k samples) and a known, factorizing target distribution on $n=4$ simplices with varying number of classes $c$. By leveraging information geometry and gradual decision-making over time, our proposed approach (red) is able to outperform our earlier method \cite{Boll:2024ab} as well as Dirichlet flow matching \cite{Stark:2024aa} in terms of scaling to many classes $c$.}\label{fig:stark_simplex_scaling}
\end{figure}

\subsection{Generating Image Segmentations}\label{sec:experiments_img}
In image segmentation, a joint assignment of classes to pixels is usually sought conditioned on the pixel values themselves. Here, we instead focus on the \emph{unconditional} discrete distribution of segmentations, without regard to the original pixel data. These discrete distributions are very high-dimensional in general, with $N = c^n$ increasing exponentially in the number of pixels.

To this end, we parametrize $F_\theta$ by the UNet architecture of \cite{Dhariwal2021diffusion} and train on downsampled segmentations of Cityscapes \cite{Cordts2016Cityscapes} images, as well as MNIST \cite{lecun2010mnist}, regarded as binary $c=2$ segmentations after thresholding continuous pixel values at $0.5$.
Details of the training procedure are relegated to Appendix~\ref{sec:experiments_img_app}.

Figures~\ref{fig:mnist_similarity} and~\ref{fig:city_similarity} show samples from the learned distribution of binarized MNIST and Cityscapes segmentations respectively, next to the closest training data. This illustrates that our model is able to interpolate the data distribution, without simply memorizing training data.
Additional samples from our Cityscapes model are shown in Figure~\ref{fig:city_samples}.

\begin{figure}[hb]%[htbp]
    \centering
    \includegraphics[width=0.7\textwidth]{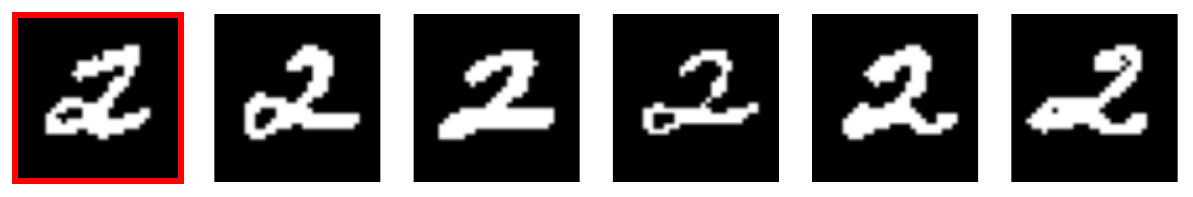}
    \includegraphics[width=0.7\textwidth]{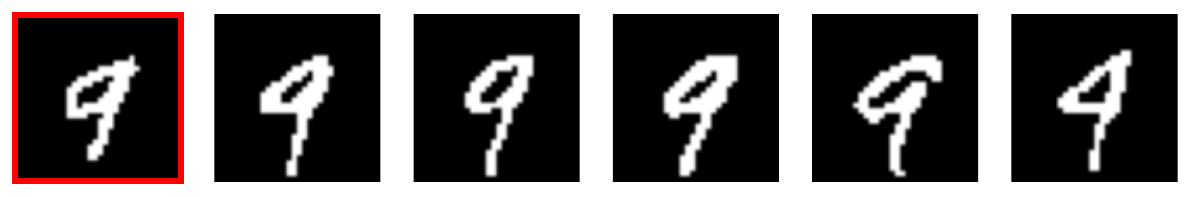}
    \includegraphics[width=0.7\textwidth]{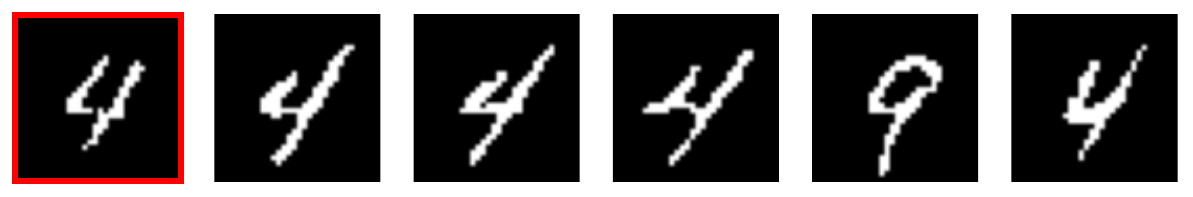}
    \includegraphics[width=0.7\textwidth]{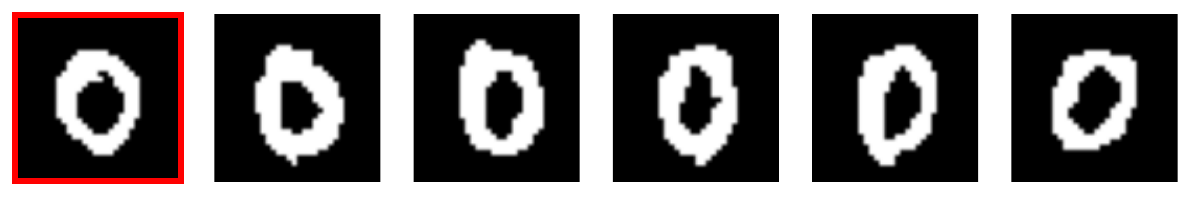}
    \includegraphics[width=0.7\textwidth]{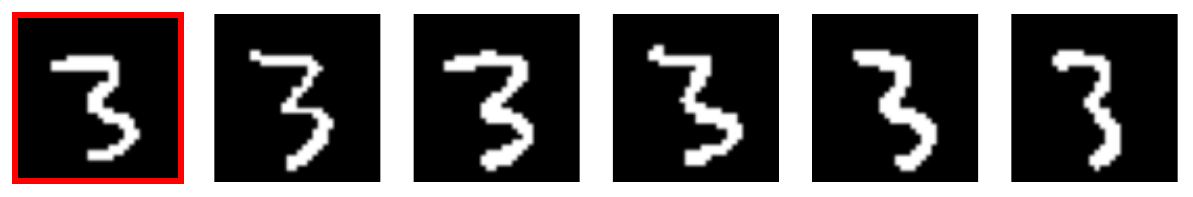}
    \caption{Comparison of model samples to the closest training data. \emph{Left with red border}: samples drawn from our model of the binarized MNIST distribution. \emph{Right}: training data closest to the sample in terms of pixel-wise distance.}
    \label{fig:mnist_similarity}
\end{figure}

\begin{figure}[htbp]
    \centering
    \includegraphics[width=1.\textwidth]{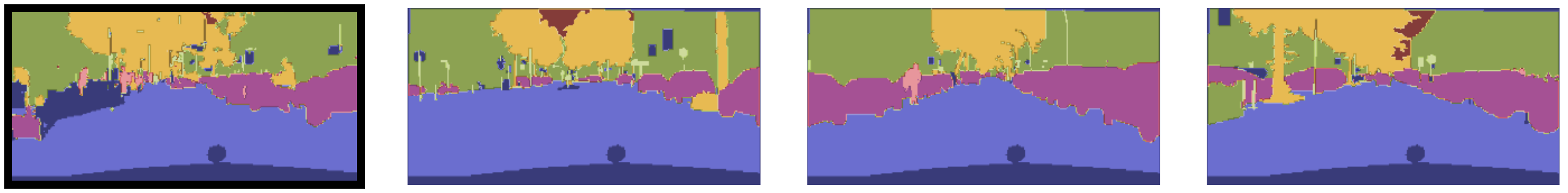}
    \includegraphics[width=1.\textwidth]{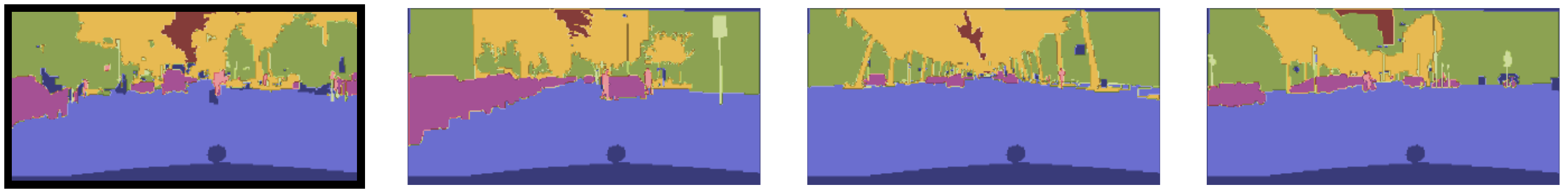}
    \includegraphics[width=1.\textwidth]{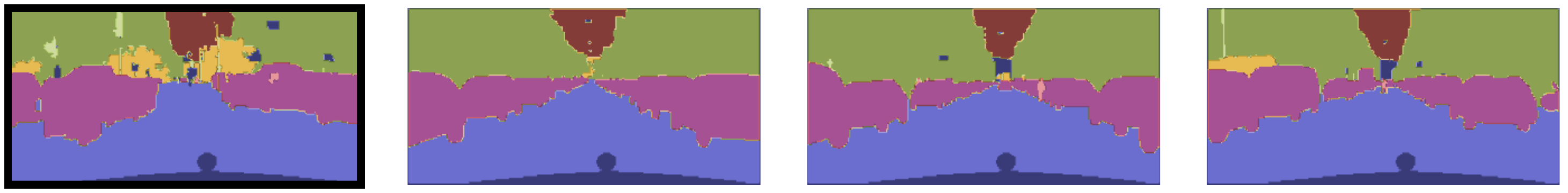}
    \includegraphics[width=1.\textwidth]{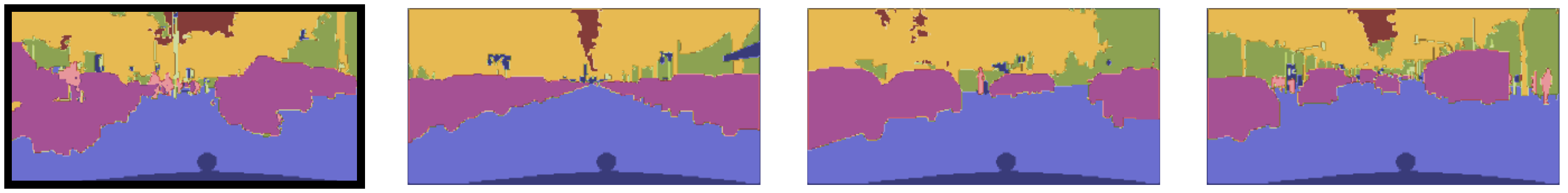}
    \includegraphics[width=1.\textwidth]{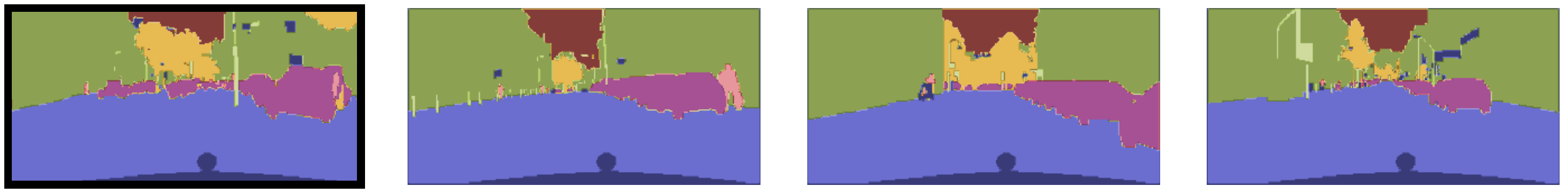}
    \caption{Comparison of model samples to the closest training data. \emph{Left with black border}: samples drawn from our model of the Cityscapes segmentation distribution. \emph{Right}: training data closest to the sample in terms of pixel-wise distance.}
    \label{fig:city_similarity}
\end{figure}

\begin{figure}[htbp]
    \centering
    \includegraphics[width=1.\textwidth]{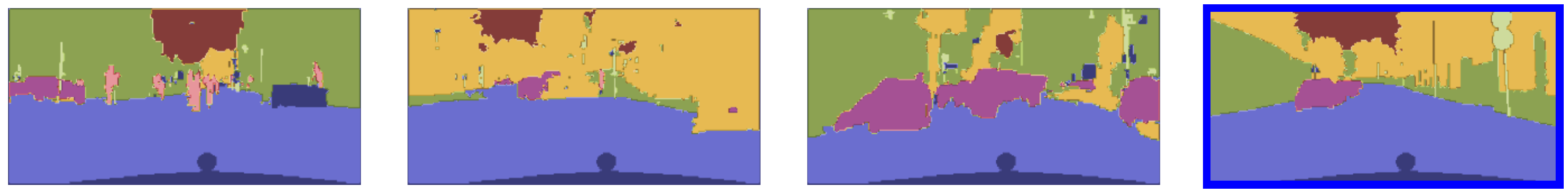}
    \includegraphics[width=1.\textwidth]{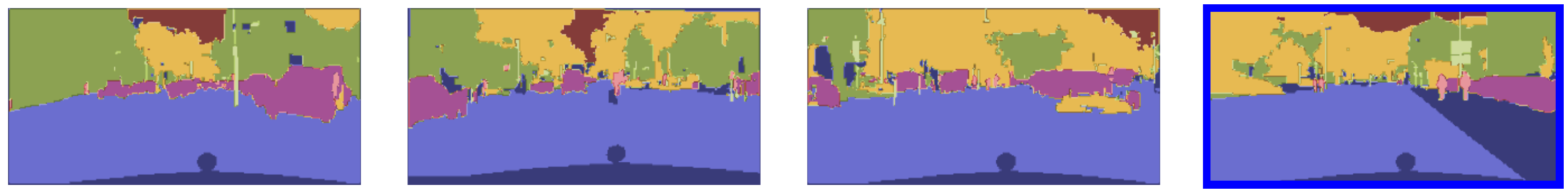}
    \includegraphics[width=1.\textwidth]{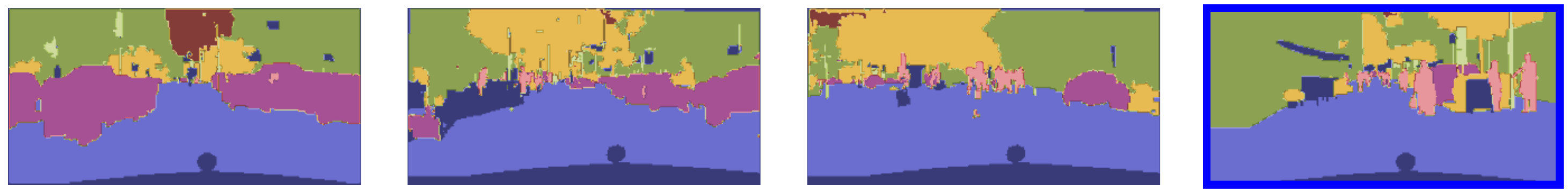}
    \includegraphics[width=1.\textwidth]{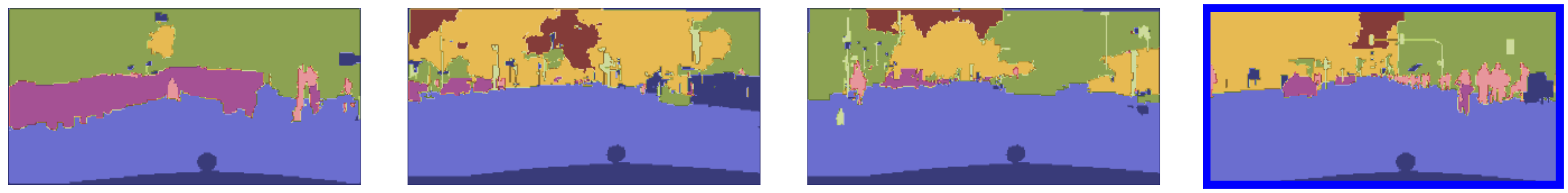}
    \includegraphics[width=1.\textwidth]{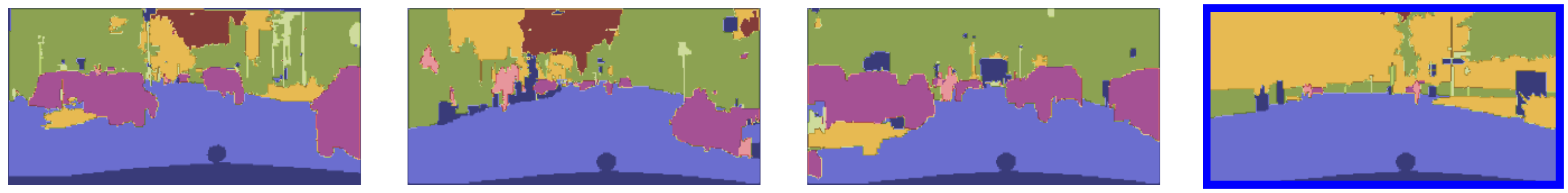}
    \caption{\emph{Left}: Samples from our model of the Cityscapes segmentation distribution. \emph{Right with blue border}: randomly drawn training data.}
    \label{fig:city_samples}
\end{figure}

\begin{figure}[htbp]
    \centering
    \includegraphics[width=0.95\textwidth]{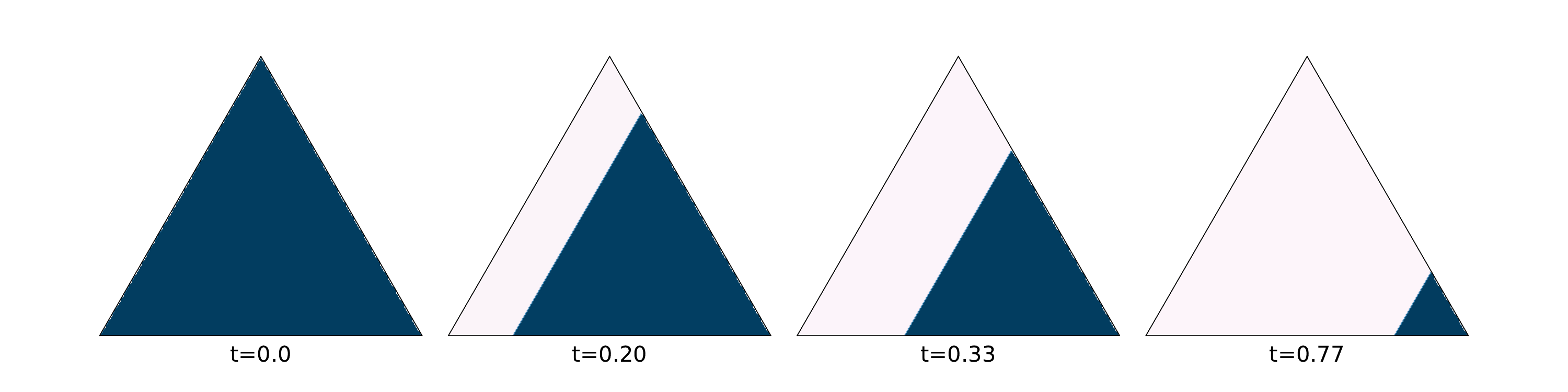}
    \includegraphics[width=0.95\textwidth]{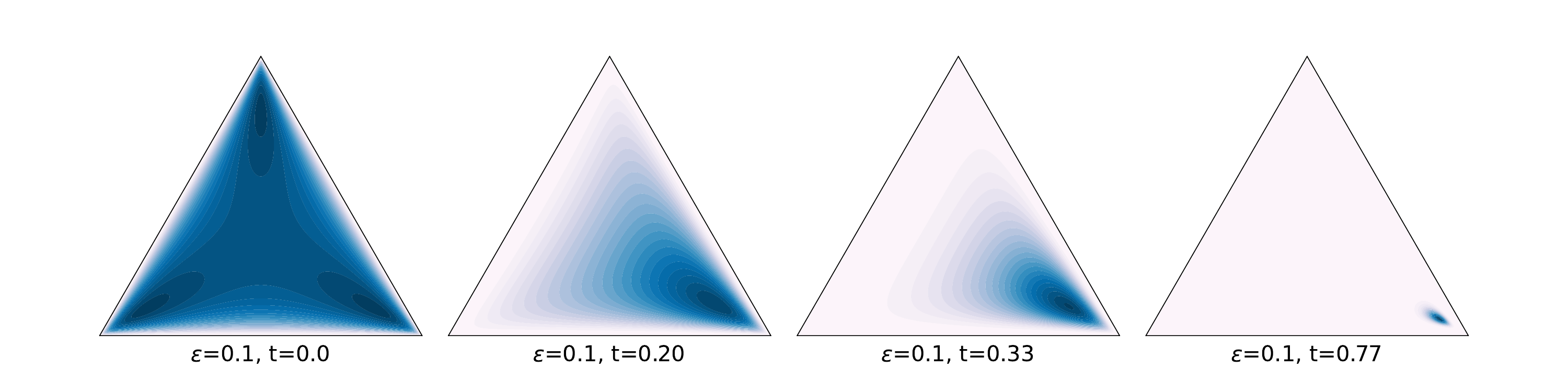}
    \includegraphics[width=0.95\textwidth]{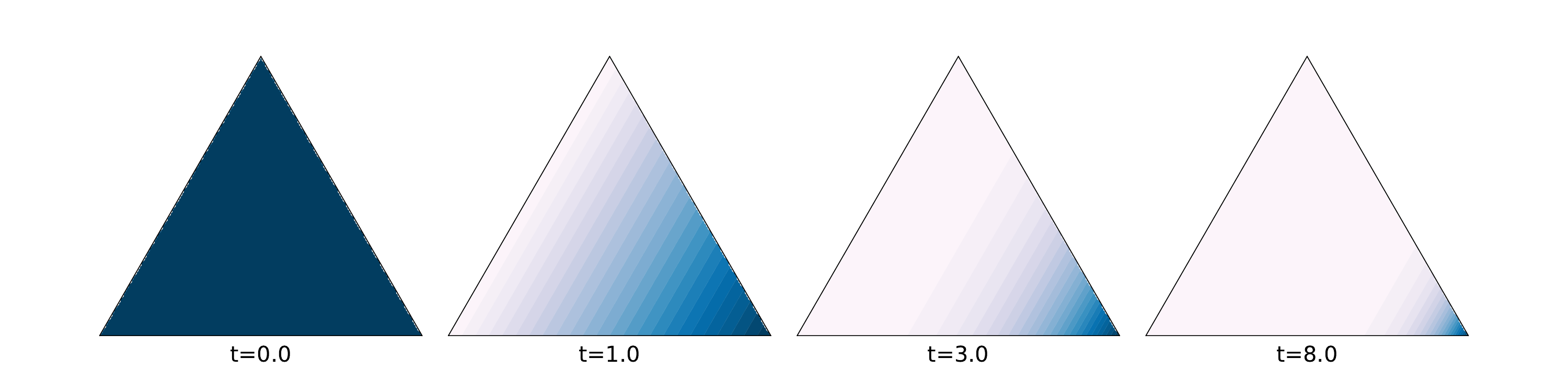}
    \includegraphics[width=0.95\textwidth]{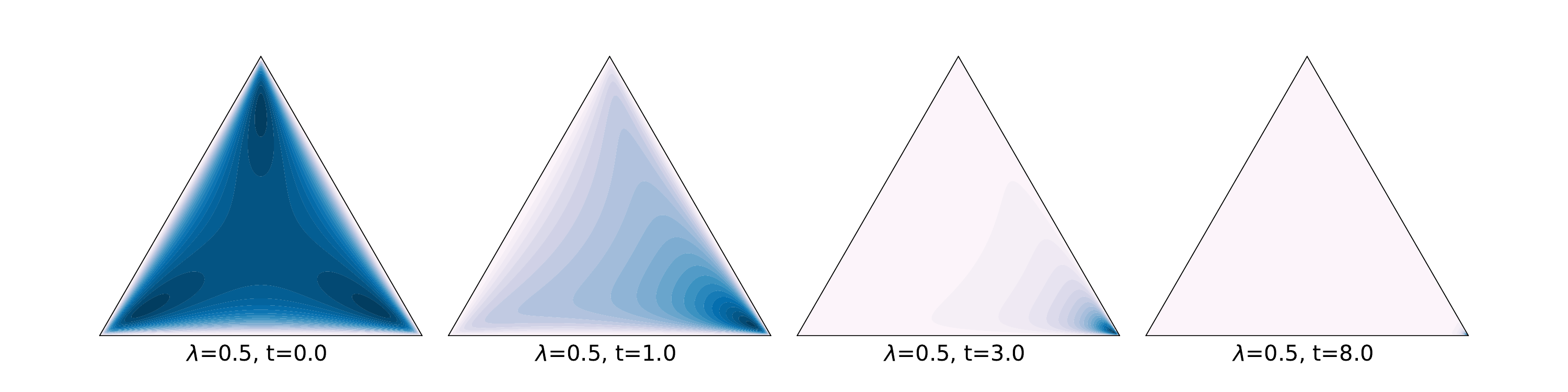}
    \includegraphics[width=0.95\textwidth]{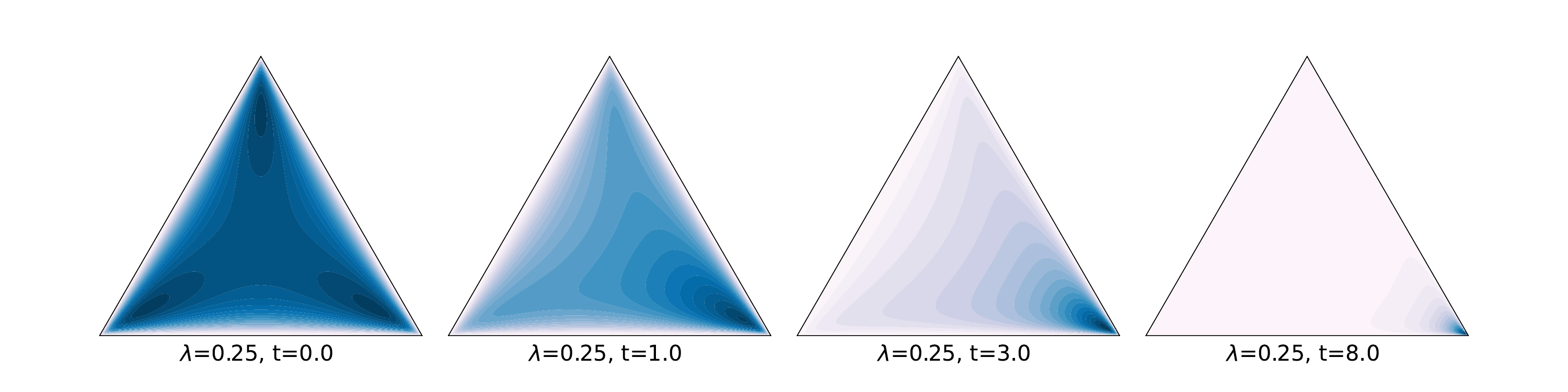}
\caption{Plots of conditional densitities $\nu_{t}(\beta)$ for different points of time $t$. Darker colors indicate higher concentration within the densities. \textit{From top to bottom:} Linear Flow Matching \cite[Equation 11]{Stark:2024aa}, the approach \cite[Equation 18]{Boll:2024ab}, Dirichlet Flow Matching \cite[Equation 14]{Stark:2024aa}, our approach \eqref{eq:nu_cond_lifted_gauss} using two different values of the rate parameter $\lambda$. Note the different time periods $t\in[0, 0.77]$ used for the first two and $t\in [0,8]$ for the latter approaches. See Section \ref{sec:class_scaling} for a discussion.}
    \label{fig:density_contour_timescales}
\end{figure}

\begin{figure}[htbp]
    \centering
    \includegraphics[width=1.\textwidth]{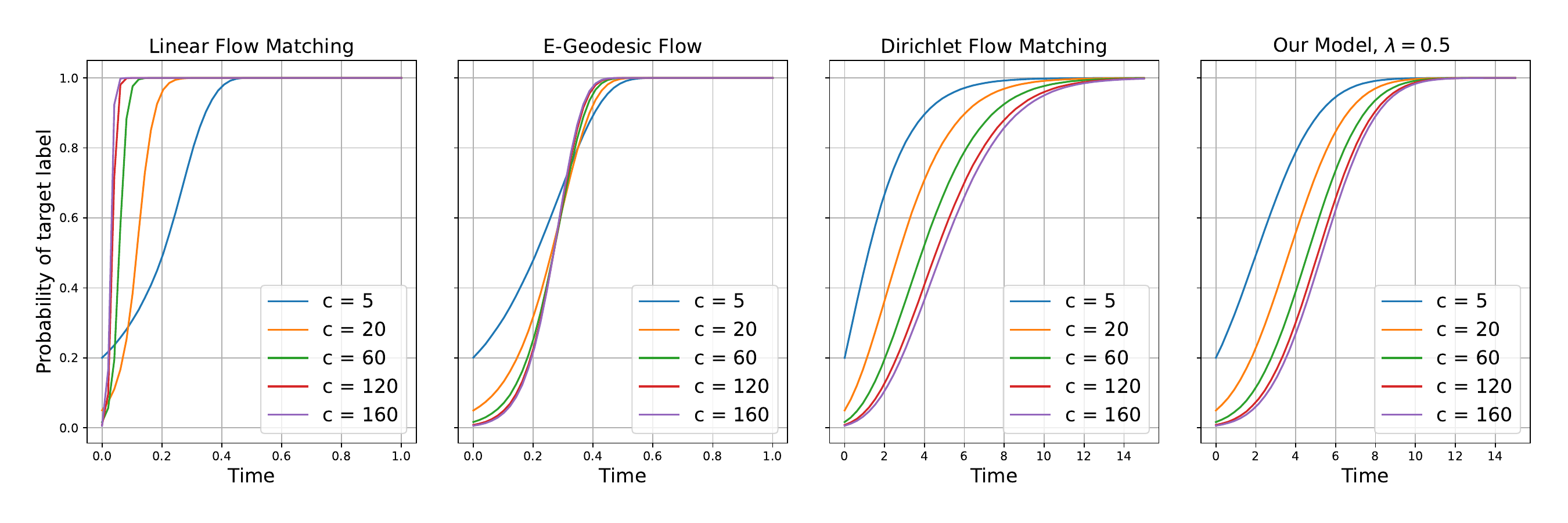}
    \includegraphics[width=1.\textwidth]{Figures/rate_plots_comparison_difflambda.pdf}
\caption{\textit{Top row:} Plots of conditional densitities paths $t\mapsto \nu_{t}(\beta)$ for various models. \textit{Bottom row:} Impact of the rate parameter $\lambda$ of our approach (replication of Figure \ref{fig:rate-parameter-lambda} to ease visual comparison).}
    \label{fig:density_rates-1}
\end{figure}

\subsection{Likelihood Evaluation}\label{sec:likelihood_experiments}
We compute the likelihood of test data from the MNIST dataset (binarized by thresholding) using the method described in Section~\ref{sec:Likelihood}.
We use 100 priority samples per datum and, as is common practice for normalizing flows, only a single Hutchinson sample. 
The result is shown in Table~\ref{tab:likelihood}, compared to our earlier approach \cite{Boll:2024ab} ($t\to 1$). For comparison, we show likelihood of MNIST test data (from the \emph{continuous}, non-binarized distribution) under several normalizing flow methods from the literature which were trained using likelihood maximization.

Note that, although much prior work on generative modelling has been applied to continuous gray value MNIST images, binarization (in our case through thresholding) substantially changes the data distribution. Thus, likelihood of test data, which is commonly used as a surrogate for relative entropy to the data distribution in normalizing flows, is not comparable between these methods and ours.
In addition, since we do not use likelihood maximization as a training criterion, it is not to be expected that our model is competitive on this measure. 
Still, the results of Table~\ref{tab:likelihood} indicate that the proposed model ($t\to \infty$) fits the binarized MNIST data distribution better in terms of relative entropy than our previous approach \cite{Boll:2024ab} ($t\to 1$).

\begin{table}[ht]
\caption{Likelihood of binarized MNIST test data under our proposed model ($t\to \infty$) and the earlier version \cite{Boll:2024ab} ($t\to 1$). Both methods are trained by flow matching rather than likelihood maximization.}\label{tab:likelihood}
\begin{tabular}{@{}lll@{}}
\toprule
Method                  & AF ($t\to \infty$) & AF ($t\to 1$)   \\ \midrule
Likelihood (bits / dim) & 1.01 $\pm$ 0.17    & 4.05 $\pm$ 0.83 \\ \bottomrule
\end{tabular}
\end{table}

% Continuous MNIST likelihood
% FFJORD RNODE 0.97 bpd
% https://arxiv.org/pdf/2002.02798v3
% FFJORD 0.99
% https://arxiv.org/abs/1810.01367
% iResNet 1.06
% https://arxiv.org/pdf/1811.00995v3
% Binarized MNIST likelihood
% AF t -> infty
% 1.01 $\pm$ 0.17
% AF t -> 1
% 4.05 +- 0.83

%%%
\newpage
%%%
\section{Conclusion}\label{sec:Conclusion}
% !TEX root =  ../revision_AFGenerative.tex
%%%%%%%%%%%%%%%%%%%%%

We introduced a novel generative model for the representation and evaluation of joint probability distributions of discrete random variables. The approach employs an embedding of the assignment manifold in the meta-simplex of all joint probability distributions. Corresponding measure transport by randomized assignment flows approximates joint distributions of discrete random variables in a principled manner. The approach enables to learn the statistical dependencies of any set of discrete random variables and using the resulting model for structured prediction, independent of the area of application.

Inference using the approach is computationally efficient, since sampling can be accomplished by parallel geometric numerical integration. Training the generative model using given empirical data is computationally efficient, since matching the flow of corresponding e-geodesics is used as training criterion, which does not require sampling as a subroutine. 

Numerical experiments showed superior performance in comparison to recent related work, which we attribute to consistently using the underlying information geometry of assignment flows and the corresponding measure transport along conditional probability paths. On the other hand, 
the fact that even our \textit{preliminary} approach \cite{Boll:2024ab} can outperform Dirichlet flow matching \cite{Stark:2024aa} with respect to scaling to many classes in Figure~\ref{fig:stark_simplex_scaling}, is surprising, because the approach \cite{Boll:2024ab} uses a \textit{finite} integration time and moves all mass of the reference distribution to a Dirac measure close to $\ol{W}_\beta$ within this finite time. The core assumptions of \cite[Proposition~1]{Stark:2024aa}, therefore, apply to this approach, and the fact that it still performs well empirically suggests that further inquiry into this topic is warranted.

\newpage
%%%
\appendix 
\section{Proofs}
% !TEX root =  ../revision_AFGenerative.tex
%%%%%%%%%%%%%%%%%%%%%

\subsection{Proofs of Section \ref{sec:conditional_fields}}\label{sec:proofs-conditional-fields}

\begin{proof}[Proof of Proposition \ref{prop:interpolants}]
Since $V_{\beta}$ is determined by $\beta$ and does not depend on $V$, the map $V\mapsto V + \lambda t \lambda V_{\beta} $ is affine. Hence, Eq.~\eqref{eq:condflow-model} conforms to
\eqref{eq:nu_cond_lifted_gauss}, because affine transformations of normal distributions are again normal distributions. The mapping $\exp_{\eins_{\mc W}}(\cdot) : \mc W \to \mc T_0$ is a diffeomorphism. Consequently, the inverse of \eqref{eq:condflow-model} can be computed from
\begin{subequations}\label{eq:inv-interpolant}
\begin{align}
& W :=\psi_{t}(V|\beta) = \exp_{\eins_{\mc W}} \big( V + t\lambda V_\beta \big) \label{eq:inv-interpolant-1}  
\\ \label{eq:inv-psi-t}
    \Leftrightarrow  \qquad & \psi_t^{-1}(W|\beta) = V = \exp_{\eins_{\mc W}}^{-1}(W)- t\lambda V_\beta ,
\end{align}
\end{subequations}
which verifies \eqref{eq:condflow-inv-model}. Regarding \eqref{eq:condvectorfield-model}, recall that the conditional flow is determined by the conditional vector field through the ODE 
\begin{equation}\label{eq:cond-flow}
    \frac{d}{dt} \psi_t(V|\beta) = u_t\big(\psi_t(V|\beta) \big| \beta\big), \qquad  \psi_0(V|\beta) = \psi_{0}(V)=\exp_{\eins_{\mc{W}}}(V).
\end{equation}
%and that the conditional vector field is determined by
%\begin{equation}\label{eq:relation-flow-vf}
%u_{t}(W|\beta)
%= \frac{d}{dt}\psi_{t}\big(\psi_{t}^{-1}(W|\beta)\big|\beta\big).
%\end{equation}
On the other hand, direct computation of the time derivative of  \eqref{eq:inv-interpolant-1} using the closed-form expression
\begin{equation}
d\exp_{W}(V) [U] = R_{\exp_{W}(V)}[U]
\end{equation}
for the differential of the lifting map \eqref{eq:def-lifting-W}, yields
\begin{equation}\label{eq:diff-interpolant}
    \frac{d}{dt} \psi_t(V|\beta) 
    = R_{\psi_t(V|\beta)}[\lambda V_{\beta}].
\end{equation}
Equating \eqref{eq:cond-flow} and \eqref{eq:diff-interpolant} and using $W=\psi_{t}(V|\beta)$  from \eqref{eq:inv-psi-t} proves
\eqref{eq:condvectorfield-model}.

\end{proof}

\begin{proof}[Proof of Proposition 
\ref{prop:cond_path_constraints}]\label{proof-3.3}
Equation \eqref{eq:nu0-conditional} is immediate due to \eqref{eq:def-mcN0}, \eqref{eq:def-nu0} and \eqref{eq:nu_cond_lifted_gauss}. Writing short
\begin{equation}
\psi_{t}:= \psi_{t}(\cdot|\beta)
\end{equation}
for the flow map defined by \eqref{eq:condflow-model}, it remains to show that
\begin{equation}\label{eq:toshow}
    \lim_{t \to \infty} \nu_{t}(\beta) = \lim_{t\to \infty} (\psi_{t})_{\sharp} \nu_0 = \delta_{\ol W_\beta}.
\end{equation}
To this end, we demonstrate that every marginal of the conditional probability path \eqref{eq:toshow} converges to a Dirac measure supported on the assignment vector corresponding to the labeling configuration $\beta$, i.e. 
\begin{equation}\label{eq:toshow-marginal}
    \lim_{t \to \infty} \nu_{t;i}(\beta)  
    = \lim_{t\to \infty}(\psi_{t;i})_{\sharp}\nu_{0;i} =\delta_{\ol W_{\beta;i}}, \quad i\in[n],
\end{equation}
where $\nu_{0;i},\,i\in[n]$ denote the marginals of $\nu_0$ given by \eqref{eq:def-nu0}. 

First, we observe that by fixing an orthonormal basis of $T_0$ as column vectors of the matrix $\mc B$, every marginal $\nu_{0;i}$ of \eqref{eq:def-nu0} with Gaussian $\mc N_0$ defined by \eqref{eq:def-mcN0} can be expressed as the lifted image measure of a standard normal distribution $\mc N\big(0_{c-1},I_{c-1}\big)$ on $\R^{c-1}$ with respect to the basis $\mc B$, 
\begin{equation}
\nu_{0;i} = (\exp_{\eins_{\mc{S}}})_{\sharp}{\mc B}_{\sharp}\mc N\big(\cdot; 0_{c-1}, I_{c-1}\big) 
= (\exp_{\eins_{\mc{S}}})_{\sharp}\mc{N}(\cdot; 0_{c},\pi_{0}),
\end{equation}
since $\mc B {\mc B}^\T = \pi_{0}$. Consequently, by Proposition \ref{prop:interpolants},
\begin{equation}
\nu_{t;i}(\beta) 
=\big(\psi_{t;i}\big)_{\sharp}\mc{N}(\cdot; 0_{c},\pi_{0})
\end{equation}
and hence using the change-of-variables formula and \eqref{eq:inv-psi-t}, one has for any $p\in\mc{S}_{c}$,
\begin{equation}
\nu_{t;i}(p|\beta) = \mc{N}\big(\exp_{\eins_{\mc S}}^{-1}(p) - t \lambda V_{\beta;i}; 0_{c},\pi_{0}\big) |\det d\psi_{t;i}^{-1}|.
\end{equation}
Equation \eqref{eq:condflow-inv-model} shows that the differential $d\psi_{t;i}^{-1}$ does not depend on $t$. Neither does the normalizing factor of the normal distribution, due to the covariance matrix $\pi_{0} = \mathrm{id}_{T_{0}}$. Consequently, since $\psi_{t;i}^{-1}$ maps to $T_{0}$,
\begin{subequations}
\begin{align}
    \nu_{t}( p | \beta)  
    &\propto  \exp\Big(-\frac{1}{2}\big\la \exp_{\eins_{\mc S}}^{-1}(p) -t \lambda V_{\beta;i}, \pi_{0} \big(\exp_{\eins_{\mc S}}^{-1}(p) -t \lambda V_{\beta;i} \big) \big\ra\Big) \\
    &= \exp\Big(-\frac{1}{2}\big\la \exp_{\eins_{\mc S}}^{-1}(p) -t \lambda V_{\beta;i}, \big(\exp_{\eins_{\mc S}}^{-1}(p) -t \lambda V_{\beta;i} \big) \big\ra\Big) \to 0 \quad\text{as}\quad t\to \infty,
\end{align}
\end{subequations}
for any $p\neq \ol{W}_{\beta;i}\in\ol{\mc{S}_{c}}$ and $i\in[n]$, due to the choice \eqref{eq:tangent_gaussian_path} of the tangent vector $V_{\beta}$. 
We conclude that the image measure $\nu_{\infty;i}(\beta)$ is a Dirac measure concentrated on $\ol W_{\beta;i}$.
\end{proof}

\newpage
%%%%%%
\subsection{Proofs of Section \ref{sec:structured_flow_matching}}

\begin{proof}[\text{Proof of Lemma \ref{lem:orth_projection}}]
By \cite[Lemma~4]{Boll:2021vb}, one has $Q^\top QV = c^{n-1}V$ for all $V\in \mc{T}_{0}$. Thus, ${Q_c}$ defined by \eqref{eq:def-Qc} has the property
\begin{equation}\label{eq:Q_normalized}
{Q_c}^\top{Q_c} V = V,\qquad \text{for all }\; V\in \mc{T}_{0}.
\end{equation}
To show that \eqref{eq:def_proj0} indeed defines the orthogonal projection onto $\mimg Q\cap \mc{T}_0\mc{S}_N$, note that 
\begin{equation}\label{eq:Qc-Pi0-relation}
{Q_c}\Pi_0 = \pi_0{Q_c}
\end{equation}
by \cite[Lemma~A.3]{Boll:2024aa} and accordingly
\begin{equation}\label{eq:Qt_Pi0_commute}
{Q_c}^\top\pi_0 
    %= {Q_c}^\top \pi_0^\top 
    = (\pi_0{Q_c})^\top 
    = ({Q_c}\Pi_0)^\top 
    %= \Pi_0^\top {Q_c}^\top
    = \Pi_0 {Q_c}^\top
\end{equation}
by using the symmetry of $\Pi_0$ and $\pi_0$. We can use this to show $\mimg \proj_0\subseteq \mimg Q\cap \mc{T}_0\mc{S}_N$, because for any $x\in \R^{n\times c}$, we have
\begin{equation}\label{eq:proj_0_subset_img}
{Q_c}\Pi_0 x \in \mimg Q\qquad\text{and}\qquad {Q_c}\Pi_0x 
\overset{\eqref{eq:Qc-Pi0-relation}}{=} 
\pi_0{Q_c}x \in \mc{T}_0\mc{S}_N.
\end{equation}
Now let $v\in \mc{T}_0\mc{S}_N$ and $y\in \mimg Q\cap \mc{T}_0\mc{S}_N$ be arbitrary. Then $y$ can be written as $y = {Q_c}y'$ and we have
\begin{subequations}\label{eq:proj_0_proof}
\begin{align}
\la v - \proj_0(v), y\ra &= \la v - {Q_c}\Pi_0{Q_c}^\top v, {Q_c}y'\ra
    = \la {Q_c}^\top v - {Q_c}^\top{Q_c}\Pi_0{Q_c}^\top v, y'\ra
    \\
    &\overset{\eqref{eq:Q_normalized}}{=}
    \la {Q_c}^\top v - \Pi_0{Q_c}^\top v, y'\ra
%    \overset{\eqref{eq:Qt_Pi0_commute}}{=} 
%    \la {Q_c}^\top v - {Q_c}^\top\pi_{0}{Q_c}{Q_c}^\top v, y'\ra
    \overset{\eqref{eq:Qt_Pi0_commute}}{=}
    \la {Q_c}^\top v - {Q_c}^\top \pi_0 v, y'\ra\\
    &= 0,
\end{align}
\end{subequations} 
which shows that $\proj_0$ projects orthogonally.
\end{proof}

\begin{proof}[\text{Proof of Theorem \ref{theorem:proj_Sn_fm}}]
We use the representation of $\proj_0$ (Lemma \ref{lem:orth_projection}) to compute the pushforward \eqref{eq:proj_cond_measure}. 
\begin{subequations}\label{eq:proj_cond_measure_proof}
\begin{align}
(\proj_{\mc{T}})_\sharp \nu_t^{\mc{S}_N}(\beta)
    &\overset{\eqref{eq:proj_pt_cond_to_T}}{=} (\exp_{\eins_{\mc{S}_N}}\circ \proj_0\circ \exp_{\eins_{\mc{S}_N}}^{-1})_\sharp \nu_t^{\mc{S}_N}(\beta)\\
    &\stackrel{\eqref{eq:expNt_cond_Sn}}{=} (\exp_{\eins_{\mc{S}_N}}\circ \proj_0)_\sharp \mc{N}_t^{\mc{S}_N}(\cdot|\beta)\\
    &\stackrel{\eqref{eq:def_proj0}}{=} (\exp_{\eins_{\mc{S}_N}}\circ {Q_c}\Pi_0 {Q_c}^\top)_\sharp \mc{N}_t^{\mc{S}_N}(\cdot|\beta)\\
        &\stackrel{\eqref{eq:Nt_cond_Sn_tangent}}{=} (\exp_{\eins_{\mc{S}_N}}\circ {Q_c}\Pi_0 {Q_c}^\top)_\sharp \mc{N}(\cdot;t c^{n-1}\lambda \pi_0e_\beta, c^{n-1}\pi_0)\\
    &= (\exp_{\eins_{\mc{S}_N}})_\sharp \mc{N}(\cdot; t  c^{n-1}\lambda {Q_c}\Pi_0 {Q_c}^\top \pi_0e_\beta, c^{n-1}{Q_c}\Pi_0 {Q_c}^\top\pi_0({Q_c}\Pi_0 {Q_c}^\top)^\top)\\
%&\overset{\eqref{eq:Qt_Pi0_commute}}{=}
%(\exp_{\eins_{\mc{S}_N}})_\sharp \mc{N}(t c^{n-1}\lambda {Q_c}\Pi_0 {Q_c}^\top \pi_0e_\beta, c^{n-1}{Q_c}\Pi_0 {Q_c}^\top\pi_0({Q_c}\Pi_0 {Q_c}^\top)^\top)\\
&\overset{\substack{\eqref{eq:def-Qc} \\ \eqref{eq:Qt_Pi0_commute}}}{=}
(\exp_{\eins_{\mc{S}_N}})_\sharp \mc{N}(\cdot; t \lambda Q\Pi_0 Q^\top e_\beta, c^{n-1}{Q_c}\Pi_0 {Q_c}^\top {Q_c}\Pi_0 {Q_c}^\top)\\
&\overset{\substack{\eqref{eq:def-Qc} \\\eqref{eq:Q_normalized}}}{=}
(\exp_{\eins_{\mc{S}_N}})_\sharp \mc{N}(\cdot; t\lambda Q \Pi_0 Q^\top e_\beta, Q \Pi_0Q^\top)
\\
&= (\exp_{\eins_{\mc{S}_N}} \circ Q)_\sharp \mc{N}(\cdot; t\lambda \Pi_0 Q^\top e_\beta, \Pi_0).
%    \\
%    \\
%    &\stackrel{\eqref{eq:Nt_cond_Sn_tangent}}{=} (\exp_{\eins_{\mc{S}_N}}\circ {Q_c}\Pi_0 {Q_c}^\top)_\sharp \mc{N}(tc^{n-1}\lambda \pi_0e_\beta, c^{n-1}\pi_0 \pi_{0}^{\T})\\
%    &= (\exp_{\eins_{\mc{S}_N}})_\sharp \mc{N}(tc^{n-1}\lambda {Q_c}\Pi_0 {Q_c}^\top \pi_0e_\beta, c^{n-1}{Q_c}\Pi_0 {Q_c}^\top\pi_0({Q_c}\Pi_0 {Q_c}^\top\pi_0)^\top)\\
%    &= (\exp_{\eins_{\mc{S}_N}})_\sharp \mc{N}(t\lambda Q \Pi_0Q^\top e_\beta, c^{n-1}{Q_c}\Pi_0 {Q_c}^\top {Q_c} \Pi_0{Q_c}^\top)\\
%    &= (\exp_{\eins_{\mc{S}_N}})_\sharp \mc{N}(t\lambda Q \Pi_0\Pi_0^\top Q^\top e_\beta, Q \Pi_0Q^\top).
\end{align}
\end{subequations}
By \cite[Lemma~3.4]{Boll:2024aa}, we have $Q^\top e_\beta = M e_\beta$, with $Q$ and $M$ defined by \eqref{eq:def-Q-embedding} and \eqref{eq:def-M-map}. Using the shorthand $V_\beta$ defined by \eqref{eq:tangent_gaussian_path} and the lifting map lemma \eqref{eq:lifting_map_lemma}, this shows
\begin{subequations}\label{eq:proj_cond_measure_proof2}
\begin{align}
(\proj_{\mc{T}})_\sharp \nu_t^{\mc{S}_N}(\beta)
    &= (\exp_{\eins_{\mc{S}_N}}\circ Q)_\sharp \mc{N}(\cdot;t\lambda V_\beta, \Pi_0)\label{eq:proj_cond_measure_proof2_kickoff}\\
    &\overset{\eqref{eq:lifting_map_lemma}}{=} 
    (T\circ \exp_{\eins_{\mc{W}}})_\sharp \mc{N}(\cdot;t\lambda V_\beta, \Pi_0)\\
    &\overset{\eqref{eq:tangent_gaussian_path}}{=} 
    (T\circ \exp_{\eins_{\mc{W}}})_\sharp \mc{N}_{t, \beta}\\
    &\overset{\eqref{eq:nu_cond_lifted_gauss}}{=} 
    T_\sharp \nu_t(\beta)
\end{align}
\end{subequations}
which is the assertion \eqref{eq:proj_cond_measure}. 

Returning to \eqref{eq:proj_cond_measure_proof2_kickoff}, we compute the conditional vector field whose flow generates the path $(\proj_{\mc{T}})_\sharp \nu_t^{\mc{S}_N}(\beta)$ by
\begin{equation}\label{eq:proj_cond_vectorfield}
u_t^{\mc{T}}(q|\beta)
    = d\exp_{\eins_{\mc{S}_N}}(v)[\lambda QV_\beta] = R_q[\lambda QV_\beta]
\end{equation}
with $v = \exp_{\eins_{\mc{S}_N}}^{-1}(q)$, analogous to  \eqref{eq:condvectorfield-model}.  
This shows the shape of the flow matching criterion \eqref{eq:proj_flow_matching}. It remains to show that it is equal to \eqref{eq:af_flow_matching}.

Substituting the ansatz $\wt f_\theta = Q \circ F_\theta \circ M$ into this criterion gives
\begin{equation}\label{eq:af_flow_matching_metasimplex}
\mc{L}_{\mrm{RCFM}}^{\mc{T}} = 
\EE_{t\sim\rho, \beta\sim p, W\sim \nu_t(\beta)}\,\Big[
    \big\|R_{T(W)}[\lambda Q(V_\beta) - (Q\circ F_\theta)(W,t)]\big\|_{T(W)}^2\Big].
\end{equation}
By \cite[Theorem~3.1]{Boll:2024aa}, $T\colon \mc{W}_{c}\to\mc{T}\subseteq \mc{S}_N$ defined by \eqref{eq:def-T-embedding} is a Riemannian isometry. Thus, for any vector field $X\colon \mc{W}_{c}\to \mc{T}$ and any $W\in \mc{W}_{c}$, it holds that
\begin{equation}\label{eq:inner_prod_isometry}
\la R_W[X], R_W[X]\ra_W = \big\la dT_W\big[R_W[X]\big], dT_W\big[R_W[X]\big]\big\ra_{T(W)}.
\end{equation}
Furthermore, by \cite[Theorem~3.5]{Boll:2024aa}, one has
\begin{equation}\label{eq:T_diff_replicator}
dT_W\big[R_W[X]\big] = R_{T(W)}[QX].
\end{equation}
Taking \eqref{eq:inner_prod_isometry} and \eqref{eq:T_diff_replicator} together, \eqref{eq:af_flow_matching_metasimplex} transforms to
\begin{equation}\label{eq:W_fm_equivalence_proof}
\mc{L}_{\mrm{RCFM}}^{\mc{T}} =
\EE_{t\sim\rho, \beta\sim p, W\sim \nu_t(\beta)}\,\Big[
    \big\|R_{W}[\lambda V_\beta - F_\theta(W,t)]\big\|_{W}^2\Big]
\end{equation}
which is \eqref{eq:af_flow_matching}.
\end{proof}
\label{sec:appendix-proofs}
\section{Experiments: Details}\label{sec:appendix-experiments}
\subsection{Details of Class Scaling Experiment}\label{sec:class_scaling_app}
To parameterize $F_\theta$, we use the same convolutional architecture used in \cite{Stark:2024aa}.
We train for 500k steps of the Adam optimizer with constant learning rate $3\cdot 10^{-4}$ and batch size $128$.
We reproduce the Dirichlet flow matching results and linear flow matching baseline by using the code of \cite{Stark:2024aa}. The experiment shown in Figure~\ref{fig:stark_simplex_scaling} is slightly harder than the version in \cite{Stark:2024aa}, because we limit training to 64k steps at batch size 512 for Dirichlet- and linear flow matching.
Accordingly, both assignment flow methods are trained for 250k steps at batch size 128, such that around 32M data are seen by each model during training.

\subsection{Details of Generating Image Segmentations}\label{sec:experiments_img_app}

\subsubsection{Cityscapes Data Preparation}
Rather than the original $c=33$ classes, we only use the $c=8$ classes specified as \emph{categories} in \emph{torchvision}. The same subsampling of classes was used in the related work \cite{Hoogeboom:2021aa}. They additionally perform spatial subsampling to $32\times 64$. Instead, we subsample the spatial dimensions (\emph{NEAREST} interpolation) to $128\times 256$.

\subsubsection{Cityscapes Training}
For the Cityscapes experiment, we employ the UNet architecture of \cite{Dhariwal2021diffusion} with 
\emph{attention$\_$resolutions} (32, 16, 8), \emph{channel$\_$mult} (1,1,2,3,4), 4 attention heads, 3 blocks and 64 channels.
We trained for 250 epochs using Adam with cosine annealing learning rate scheduler starting at learning rate 0.0003 and batch size 4.
The distribution $\rho$ of times $t$ used during training is an exponential distribution with rate parameter $\lambda = 0.25$.
For sampling, we integrate up to $t_{\max} = 15$.

\subsubsection{Binarized MNIST Data Preparation}
We pad the original $28\times 28$ images with zeros to size $32\times 32$ to be compatible with spatial downsampling employed by the UNet architecture. 
Binarization is performed by pixelwise thresholding at grayvalue $0.5$.
%As Figure~\ref{fig:MNIST_similarity} shows, our model  does not simply memorize the training data.

\subsubsection{Binarized MNIST Training}
We modify the same architecture used for Cityscapes to \emph{attention$\_$resolutions} (16), \emph{channel$\_$mult} (1,2,2,2), 4 attention heads, 2 blocks and 32 channels. The same training regimen is used as for Cityscapes except for an increase in batch size to 256.
The distribution $\rho$ of times $t$ used during training is an exponential distribution with rate parameter $\lambda = 0.5$.
For sampling, we integrate up to $t_{\max} = 10$.
In table~\ref{tab:likelihood}, we use the same UNet architecture and training regimen for the comparison method \cite{Boll:2024ab} ($t\to 1$).

%All experiments in this paper were run on one of two desktop graphics cards (1x NVIDIA RTX2080ti, 1x NVIDIA RTX2060super), requiring less than 100 compute hours in total.
\section{Likelihood Computation: Details}\label{sec:appendix-likelihood}
% !TEX root =  ../revision_AFGenerative.tex
%%%%%%%%%%%%%%%%%%%%%

Assume we have learned a probability path $\nu_t$ and a final pushfoward distribution $\nu_\infty$. 
In practice, numerical integration needs to be stopped after a finite time $t = t_{\max}$, reaching a numerical pushforward distribution $\nu_{t_{\max}}\approx \nu_\infty$. 
Drawing samples from $\wt{p} = \EE_{W\sim \nu_{t_{\max}}}[T(W)]$ is a two-stage process: $W\sim \nu_{t_{\max}}$ is drawn first, followed by sampling $\beta\sim T(W)$. Due to the numerical need to stop integration at finite time, $T(W)$ may in practice not have fully reached a discrete Dirac distribution. For long sequences of random variables, such as text or image modalities, this can lead to undesirable noise in the output samples. 
A way to combat this numerical problem is by rounding to a Dirac measure before sampling. This procedure can be interpreted within the framework of \emph{dequantization}, which we elaborate in Section~\ref{sec:dequantization}. 

In practice, $W\sim \nu_{t_{\max}}$ is typically close to a discrete Dirac already (cf.~Figure \ref{fig:T0-norms}), so rounding has little impact on the represented joint distribution.
Nevertheless, the rounding process is formally a different model than $\wt{p} = \EE_{W\sim \nu_{t_{\max}}}[T(W)]$, which we explicitly distinguish for the purpose of computing likelihoods. Recall the definition \eqref{eq:rounding_region} of subsets $r_\beta\subseteq \mc{W}$ with each $W\in r_\beta$ assigning the largest probability to the labels $\beta$. 
The points in $r_\beta$ are also the ones which round to $\ol{W}_\beta$\footnote{The sets $r_\beta$ technically overlap on the boundary, but all intersections have measure zero.}.
Thus, the labeling $\beta\in [c]^n$ has likelihood
\begin{equation}\label{eq:likelihood_rounding_model}
    \wt{p}^{r}_\beta = \EE_{W\sim \nu_{t_{\max}}}[1_{r_\beta}(W)] = \PP_{\nu_{t_{\max}}}(r_\beta)
\end{equation}
under the rounding model $\wt{p}^{r}$, with $1_{r_\beta}$ denoting the indicator function of $r_\beta$.
This is numerically similar to the likelihood under our original model
\begin{equation}\label{eq:joint_model_distribution}
    \wt{p}_\beta = \EE_{W\sim \nu_\infty}[T(W)_\beta]
\end{equation}
and matches it in the limit $t\to\infty$, provided that (almost) every trajectory $W(t)$ approaches an extreme point of $\ol{\mc{W}_{c}}$ under the learned assignment flow dynamics.

We will now devise an importance sampling scheme for efficient and numerically stable approximation of the integral in \eqref{eq:likelihood_rounding_model}, that analogously applies to \eqref{eq:joint_model_distribution}.
% On the tangent space $\mc{T}_0$, the set corresponding to $r_\beta$
% \begin{equation}\label{eq:tangent_rounding_cone}
%     \wt r_\beta = \exp_{\eins_{\mc{W}}}^{-1}(r_\beta)
% \end{equation}
% is a convex polyhedral cone.
Let $\proposaldist$ be a proposal distribution with full support on $\mc{W}_{c}$ which has most of its mass concentrated around a point $q_\beta\in\mc{W}_{c}$ close to $\ol{W}_\beta$. Then
\begin{equation}\label{eq:priority_sampling}
    \PP_{\nu_{t_{\max}}}(r_\beta) = \EE_{W\sim \proposaldist}\Big[1_{r_\beta}(W)\frac{\nu_{t_{\max}}(W)}{\proposaldist(W)}\Big]
\end{equation}
where we assumed that both $\nu_{t_{\max}}$ and $\proposaldist$ have densities with respect to the Lebesgue measure and used again the symbols $\nu_{t_{\max}}$ and $\proposaldist$ to denote these densities. The rationale behind this construction is that, since we learned $\nu_{t_{\max}}$ to concentrate close to points $\ol{W}_\beta$, drawing most samples close to $q_\beta$ will reduce the estimator variance compared to sampling \eqref{eq:likelihood_rounding_model} directly.
In high dimensions, the quantities in \eqref{eq:priority_sampling} are prone to numerical underflow, which motivates the transformation
\begin{subequations}\label{eq:logsumexp_trick}
\begin{align}
\log \PP_{\nu_{t_{\max}}}(r_\beta) 
    &= \log \EE_{W\sim \proposaldist}\Big[1_{r_\beta}(W)\frac{\nu_{t_{\max}}(W)}{\proposaldist(W)}\Big]
    \\
    &= \log \EE_{W\sim \proposaldist} \big[\exp\big(\log 1_{r_\beta}(W) + \log \nu_{t_{\max}}(W) -\log \proposaldist(W)\big)\big].
\end{align}
\end{subequations}
After replacing the expectation with a mean over samples drawn from $\proposaldist$, we can evaluate \eqref{eq:logsumexp_trick} by leveraging stable numerical implementations of the $\text{logsumexp}$ function. 

For every evaluation of the integrand, we evaluate log-likelihood under $\proposaldist$ in closed form as well as log-likelihood under $\nu_{t_{\max}}$ through numerical integration backward in time, leveraging the instantaneous change of variables \eqref{eq:instant_change_of_variables} and Hutchinson's trace estimator \eqref{eq:Hutchinson}.
Note the conventions $\log 0 = -\infty$ and $\exp(-\infty) = 0$ employed in \eqref{eq:logsumexp_trick}. The analogous expression for \eqref{eq:likelihood_rounding_model} reads
\begin{equation}\label{eq:logsumexp_trick_Tansatz}
    \log \wt{p}_\beta = \log \EE_{W\sim \proposaldist} \big[\exp\big(\log T(W)_\beta + \log \nu_{t_{\max}}(W) -\log \proposaldist(W)\big)\big]
\end{equation}
and we can further expand
\begin{equation}\label{eq:logsumexp_trick_T}
    \log T(W)_\beta = \log \prod_{i\in [n]} W_{i,\beta_i} = \sum_{i\in [n]} \log W_{i,\beta_i}
\end{equation}
to avoid numerical underflow.

%%%%%%%%%%%%%%
\newpage
\bibliographystyle{amsalpha}
\bibliography{AFGenerative_v2}

\end{document}